\documentclass{article}
\usepackage{graphicx,fullpage} % Required for inserting images

\usepackage[utf8]{inputenc} % allow utf-8 input
\usepackage[T1]{fontenc}    % use 8-bit T1 fonts
\usepackage{hyperref}       % hyperlinks
\usepackage{url}            % simple URL typesetting
\usepackage{booktabs}       % professional-quality tables
\usepackage{amsfonts}       % blackboard math symbols
\usepackage{nicefrac}       % compact symbols for 1/2, etc.
\usepackage{microtype}      % microtypography
\usepackage{amsmath}
\usepackage{multirow}
\usepackage{amsthm, amssymb, adjustbox}
\usepackage{bm}
\usepackage{graphicx}
\usepackage{xparse}
\usepackage{anyfontsize}
\usepackage{thmtools} 
\usepackage{thm-restate}
\hypersetup{
    colorlinks=true,
    linkcolor=Blue,
    filecolor=magenta,      
    urlcolor=Cerulean,
    citecolor=blue,
    pdftitle={Overleaf Example},
    pdfpagemode=FullScreen,
    }

\usepackage{wrapfig}
\usepackage{cleveref}
\usepackage[dvipsnames]{xcolor}
\usepackage{subcaption}
\usepackage{algorithm}
\usepackage{algorithmic}
\usepackage{xcolor,colortbl}
\usepackage{pifont}% http://ctan.org/pkg/pifont
\newcommand{\xmark}{\ding{55}}%

\newtheorem{theorem}{Theorem}

\newtheorem{lemma}{Lemma}
\newtheorem{assumption}{Assumption}

\newtheorem{definition}{Definition}[section]
\usepackage{enumitem}
\usepackage{thmtools,thm-restate}

\theoremstyle{remark}
\newtheorem*{remark}{Remark}

\newcommand{\samp}{\mb{\pi}_{\mathsf{samp}}}
\newcommand{\bmu}{\mathbf{\mu}}
\newcommand{\coeff}{\mathrm{SEC}_{\mathrm{HybRLHF}}}
\newcommand{\coefffull}{\coeff(\Pi,T,\beta, \pi_{\rm samp}; \gamma, \mathcal{D}_{\rm off})}

\newcommand{\cA}{{\mathcal A}}

\newcommand{\pistarb}{\pistar_{\beta}}

\newcommand{\piref}{\pi_{\rm ref}}

\newcommand{\taut}{\Tilde{\tau}}

\newcommand{\Rmax}{R_{\rm max}}
\newcommand{\Vmax}{V_{\rm max}}

\renewcommand{\emptyset}{\varnothing}

\newcommand{\M}[1]{^{{\scriptscriptstyle M}}}  %

\newcommand{\pistar}{\pi^{\star}}

\newcommand{\approxleq}{\lesssim}

\newcommand{\ind}[1]{^{{\scriptscriptstyle(#1)}}}

\def\multiset#1#2{\ensuremath{\left(\kern-.3em\left(\genfrac{}{}{0pt}{}{#1}{#2}\right)\kern-.3em\right)}}

\renewcommand{\emptyset}{\varnothing}

\DeclareMathOperator*{\argmin}{argmin}

\DeclareMathOperator*{\amax}{\mathrm{argmax}}
\DeclareMathOperator*{\E}{\mathbb{E}}
\newcommand{\mb}{\mathbb}

\DeclareMathOperator*{\argmax}{arg\,max}
\newcommand{\add}[1]{\textcolor{cyan}{}}
\newcommand{\lindb}{\texttt{linDB}}
\newcommand{\linb}{\texttt{linB}}

\newcommand{\nangle}[1]{\langle #1\rangle}

\renewcommand{\a}{{\mathbf a}}
\renewcommand{\b}{{\mathbf b}}

\newcommand{\w}{{\mathbf w}}

\newcommand{\R}{{\mathbb R}}
\newcommand{\cD}{{\mathcal D}}

\usepackage[round, sort&compress]{natbib} 
\bibliographystyle{plainnat}

\title{Hybrid Preference Optimization for Alignment: Provably Faster Convergence Rates by Combining Offline Preferences with Online Exploration}
\author{
Avinandan Bose\\ University of Washington\\
avibose@cs.washington.edu \and Zhihan Xiong\\ University of Washington\\
zhihanx@cs.washington.edu  \and Aadirupa Saha \\ Apple \\ aadirupa@apple.com \\ \and Simon Shaolei Du \\ University of Washington \\ ssdu@cs.washington.edu \and Maryam Fazel\\University of Washington\\mfazel@uw.edu 
}
\date{}
\begin{document}
\maketitle

\begin{abstract}
% \avi{TODOs: The lower bound proofs are valid only for $\beta=0$, needs to be generalized for any arbitrary $\beta$.}
% \red{AS: Have more text explanation why hybrid is useful: 
% (i) Pure offline might not have enough good samples (esp. from the optimal policy, which degrades the data quality),
% (ii) Pure online is very costly if we have to learn all the arms/ dimensions through active online exploration. We should also corroborate these claims in the expts later to show how our hybrid approach beats both online and offline strategies.}
    Reinforcement Learning from Human Feedback (RLHF) is currently the leading approach for aligning large language models with human preferences. Typically, these models rely on extensive offline preference datasets for training. However, offline algorithms impose strict concentrability requirements, which are often difficult to satisfy.
    % making it challenging to guarantee the learned model's near-optimality. 
    On the other hand, 
    % online algorithms can be employed to adaptively gather preference data and learn a model efficiently. 
    while online algorithms can avoid the concentrability issue, pure online exploration could be expensive due to the active preference query cost and real-time implementation overhead. %they fail to leverage the wealth of available offline data. 
    %Motivated by recent advances in Hybrid Reinforcement Learning, we propose a novel approach: Hybrid Preference Optimization. This method combines online exploration with existing offline preferences, overcoming the limitations of offline RLHF by relaxing the stringent coverage conditions, as well as significantly improving the sample efficiency of online exploration. 
    In this paper, we propose a novel approach: Hybrid Preference Optimization (HPO) which combines online exploration with existing offline preferences by relaxing the stringent concentrability conditions for offline exploration, as well as significantly improving the sample efficiency for its online counterpart.
    We give the first provably optimal theoretical bound for Hybrid RLHF with preference feedback, providing sample complexity bounds for policy optimization with matching lower bounds. Our results yield improved sample efficiency of hybrid RLHF over pure offline and online exploration.
\end{abstract}

\section{Introduction}
\label{sec:introduction}
% \avi{Avi TODO}
% \avi{motivate the problem well by highlighting the limitations of pure online and pure offline rlhf, highlighting concentrability. Hybrid gives the best of both worlds}
Reinforcement Learning from Human Feedback (RLHF) stands out as the primary method for aligning large language models with human preferences \citep{christiano2017deep, bai2022training, ouyang2022training}. Instead of starting from scratch with unsupervised training on extensive datasets, RLHF aligns pre-trained models using labeled human preferences on pairs of responses, offering a statistically lightweight approach to making language models more human-like. While labeling response pairs is easier than generating new responses, the volume of these pairs is critical for effective alignment. A large dataset is needed to ensure broad coverage of linguistic nuances, reduce the impact of noisy human feedback, and provide enough statistical power for the model to generalize well. Although labeling individual pairs is simpler, scaling this process can still become resource-intensive, making the volume of response pairs a key factor in successful model alignment. In the light of this, recently a theoretical question of interest has arisen: \textit{How can algorithms be designed to be sample-efficient during this alignment phase?}

Two main approaches have emerged in addressing this question: online RLHF and offline RLHF. Online methods \citep{xie2024exploratory, cen2024value, zhang2024self} have interactive access to human feedback or leverage a more powerful language model to explore diverse and novel responses beyond what the pre-trained model can provide. Online exploration, though theoretically sample-efficient, is costly because it requires frequent human feedback, real-time updates to large language models, and continuous deployment to generate novel responses. These frequent model updates and the need for real-time interaction with human annotators make the process resource-intensive and time-consuming, presenting significant practical challenges in terms of both scalability and cost.

In contrast, offline RLHF relies on large, readily available sets of labeled response pairs to align the language model. While these datasets are easier to collect and use, achieving provable guarantees on the model's optimality requires stringent conditions—specifically, that the dataset is close to data generated by an optimally-aligned model. %closely represents an optimal language model
%\textcolor{magenta}{maybe say: is close to data generated by an optimally-aligned model}. 
Offline datasets, however, are static and often lack the diversity and novel responses that online exploration provides. This limitation, coupled with the assumption that the dataset contains enough near-optimal examples, restricts the model's ability to generalize effectively, potentially leading to suboptimal alignment.

This prompts the question: \textit{Can we combine offline preference data while relaxing these stringent conditions to significantly reduce the number of samples required in online exploration, thereby addressing its practical costs effectively?}

In this paper, we answer this question affirmatively.

\begin{table*}[ht]
\centering
  \begin{tabular}{l|c|c|c|c|c}
  & Category & Upper Bound & Lower Bound & Scalable & Model-free \\
  % &  &  & () &  \\
  \midrule
  \citet{xie2021policy} & Traditional RL & \checkmark\cellcolor{green!25} & \checkmark\cellcolor{green!25} & \xmark\cellcolor{red!25} & \xmark\cellcolor{red!25} \\
  \hline
  \citet{tan2024hybrid} & Traditional RL & \checkmark\cellcolor{green!25} & \checkmark\cellcolor{green!25} & \xmark\cellcolor{red!25} & \xmark\cellcolor{red!25} \\
  \hline
  \citet{xiong2024iterative} & RLHF & \checkmark\cellcolor{green!25} & \xmark\cellcolor{red!25} & \xmark\cellcolor{red!25} & \checkmark\cellcolor{green!25} \\
  \hline
  \citet{gao2024rebel}  & RLHF & \checkmark\cellcolor{green!25} & \xmark\cellcolor{red!25}  & \checkmark\cellcolor{green!25} & \checkmark\cellcolor{green!25} \\
  \hline
  \citet{chang2024dataset} & RLHF & \checkmark\cellcolor{green!25}  & \xmark\cellcolor{red!25} & \checkmark\cellcolor{green!25} & \checkmark\cellcolor{green!25} \\
  \hline
  Our work  & RLHF & \checkmark\cellcolor{green!25} &\checkmark\cellcolor{green!25} &\checkmark\cellcolor{green!25} & \checkmark\cellcolor{green!25} \\
\end{tabular}
\caption{A comparison with lines of work closest to ours.} 
\label{tab:comparison_table}
\vskip -0.2in
\end{table*}

% \avi{Include an illustration to highlight what our algorithm does}
\subsection{Contributions}
% \avi{Main sample complexity guarantee, lower bounds.}
% \avi{Avi TODO}
Here, we list our technical contributions:
\begin{itemize}
    \item We introduce the first Hybrid RLHF algorithm that is both provably efficient and practical for implementation. In contrast, previous discussions of hybrid training in \citet{xiong2024iterative}, \citet{gao2024rebel}, and \citet{chang2024dataset} either lack practical applicability or do not provide a theoretical analysis of the advantages of hybrid training. See Table~\ref{tab:comparison_table} for a detailed comparison.

    \item Our algorithm demonstrates superior sample complexity compared to both online and offline RLHF approaches (Theorem \ref{thm:main_general}). Specifically, in the case of linear MDPs, our method surpasses the limitations imposed by the lower bounds of purely online and offline training (Theorems \ref{thm:lboff} and \ref{thm:lbon}). 
    % \avi{Characterize a little bit on offline coverage and how to choose $\gamma$ in the simple linear case.}
    % \item Matching lower bound for linear model (\cref{thm:lbon},\cref{thm:lboff} in \cref{sec:lb}).
\end{itemize}

\subsection{Paper Organization}
The remaining of the paper is organized as the following. We review the related work in Section \ref{sec:related_work} and briefly introduce the background of MDPs and RLHF in Section \ref{sec:prelim}. We then present our algorithm Hybrid Preference Optimization (HPO) and its theoretical guarantee in Section \ref{sec:hpo}. Section \ref{sec:lb} examines the special case of linear MDPs and compare our results with its lower bounds in both online and offline training. We present our empirical findings in Section \ref{sec:experiments} following by a conclusion in Section \ref{sec:conclusion}.
\section{Related Work}
\label{sec:related_work}
% \avi{Zhihan TODO}
% \zhihan{Still a draft}

\paragraph{DPO-related RLHF.} 
The Direct Preference Optimization (DPO) algorithm, first introduced by \citet{rafailov2024direct}, has gained considerable attention due to its effectiveness and simplicity. Since its introduction, various DPO variants have been proposed, each with unique enhancements aimed at improving performance. Several of these methods focus on modifying the loss function, such as SLiC \citep{zhao2023slic}, RSO \citep{liu2023statistical}, P3O \citep{wu2024pairwise}, PCO \citep{xu2023some}, and SimPO \citep{meng2024simpo}. Other approaches take a broader view by incorporating general preference modeling, integrating social choice theory, and introducing algorithms like Nash-MD \citep{munos2023nash}, DNO \citep{rosset2024direct}, KTO \citep{ethayarajh2024kto}, and IPO \citep{azar2024general}.

DPO's popularity has also captured the interest of the theoretical research community. \citet{xiong2024iterative}. explored the benefits of online iterative training with an offline warm-up phase, while \citet{ye2024theoretical}. extended this analysis from a reward-based framework to a general preference oracle setting. Both studies, however, limit their analysis to pure bandit environments. \citet{rafailov2024r} further reinterpreted DPO through the lens of implicit $Q^\star$ estimation within KL-constrained deterministic token-level Markov Decision Processes (MDPs). \citet{gao2024rebel} introduced a study on relative reward modeling, while recent work by \citet{cen2024value} and \citet{xie2024exploratory} analyzed the sample efficiency of combining DPO with optimistic exploration. This research has also inspired the development of the online exploration component in our Hybrid Preference Optimization (HPO) algorithm.

Despite these advancements, none of these has deep dive into a hybrid training setting. \citet{cen2024value} and \citet{ye2024theoretical} have discussed offline and online learning separately, but neither has explored the potential benefits of combining the two. \citet{xiong2024iterative} addressed hybrid training but relied on impractical assumptions, such as coverage over the optimal policy $\pi^\star$, and failed to provide a quantitative characterization of hybrid RLHF compared to purely online or offline approaches. Similarly, while \citet{gao2024rebel} mentioned hybrid training in their work, they did not delve deeply into the topic, leaving an open area for further investigation. Finally, although the same algorithm name \textit{Hybrid Preference Optimization} is also used in \citet{song2024understanding}, that version is essentially an offline DPO regularized by unlabeled online data, which is very different from our approach of enhancing online exploration through additional offline data.

 \paragraph{Other Theoretical Work in RLHF} 
 Other theoretical work on RLHF primarily centers around learning reward models and analyzing sample complexities for specific classes of Markov Decision Processes (MDPs). Among them, \citet{zhu2023principled} and \citet{zhu2024iterative} have focused on learning various reward models, while others, including \citet{du2024exploration}, \citet{zhan2023provable}, and \citet{wu2023making}, have addressed on sample complexities under tabular or linear function approximations. Some studies, such as those by \citet{chen2022human}, \citet{wang2023rlhf}, and \citet{chang2024dataset}, also explore the use of general function approximation methods. However, these algorithms are often designed around function classes with certain complexity measures, making them less applicable in practical settings. Besides, \citet{nika2024reward} provides a detailed theoretical comparison between DPO-related RLHF and reward-learning-related RLHF. Among these studies, \citet{chang2024dataset} provides a more detailed discussion on hybrid training in RLHF. Despite this, their work shares similar limitations to \citet{xiong2024iterative}, particularly in terms of too strong assumptions on single-concentrability coefficients.
 % and the absence of practical algorithms that can be easily applied to real-world problems.
 
 % - Other theoretical work of RLHF mostly focuses on learning reward models (Zhu et al., Wang et al.) or sample complexities tailored with respect to particular class of MDPs under either tabular or linear function approximation (Du et al. Zhan et al. Wu and Sun.). Some work (Chen et al. Wang et al. and Chang et al.) also considers general function approximation. However, these algorithms are usually tailored to function class with certain complexity measure and cannot lead to practical algorithms. Among these, Chang et al. has specific discussion regarding the hybrid training, but falls similar shortcomings as Xiong et al.

\paragraph{Theory of Hybrid RL.} 
\citet{kalashnikov2018scalable} empirically demonstrate that in robot manipulation, a small amount of online fine-tunning can largely improve the performance of offline training. Following this, much of the theoretical work on hybrid RL has focused on quantitatively analyzing the advantages of combining online exploration with an initial offline dataset. Key studies by \citet{xie2021policy}, \citet{song2022hybrid}, and \citet{amortila2024harnessing} have explored these benefits, often under the assumption of certain concentrability conditions related to the behavior policy. However, recent research by \citet{tan2024hybrid} removes this requirement, offering an approach that does not impose explicit constraints on the quality of the behavior policy. Building on this line of research, our work extends the benefits of hybrid RL to the RLHF framework. Importantly, we adopt the same philosophy as \citet{tan2024hybrid} by not imposing additional requirements on the offline dataset, allowing for more flexible and practical applications in RLHF scenarios.

\section{Preliminaries}
\label{sec:prelim}

Our study of RLHF is under the general reinforcement learning setup similar to \citep{xie2024exploratory} which is a strict generalization of the token-level MDP proposed in \citep{rafailov2024r}.
\subsection{Reinforcement Learning from Human Feedback}
\label{sec:rlhf}

We consider an episodic finite horizon markov decision process characterized by $\mathcal{M} = (H, \mathcal{S}, \mathcal{A}, P, r, \rho)$ where $H$ is the time horizon, $\mathcal{S}$ denotes the state space, $\mathcal{A}$ denotes the action space, $P : \mathcal{S} \times \mathcal{A} \mapsto \Delta(\mathcal{S})$ denotes the transition function, $r : \mathcal{S} \times \mathcal{A} \mapsto \mathbb{R}$ denotes the reward function, and $\rho \in \Delta(\mathcal{S})$ denotes the initial state distribution. A randomized policy $\pi : \mathcal{S} \mapsto \Delta(\mathcal{A})$ generates a trajectory $\tau = \{(s_1, a_1), \ldots, (s_H, a_H)\}$ and rewards $r_1, \ldots, r_H$ through the following procedure. In particular, the initial state is sampled as $s_1 \sim \rho$, and for all subsequent time steps $h \in [H]$, actions are sampled as $a_h \sim \pi(s_h)$, reward is received as $r_h = r(s_h, a_h)$, and $s_{h+1} \sim P(s_h, a_h)$. We use shorthand notations $\mathbb{E}_{\pi}[\cdot]$ and $\mathbb{P}_{\pi}[\cdot]$ to denote the expectation and probability of quantities of interest under this process induced by the policy $\pi$. We assume the total reward is non-negative and bounded, i.e. $\sum_{h \in [H]} r_h \in [0, \Rmax]$. For compact notation at the trajectory level, we denote $r(\tau) = \sum_{h \in [H]} r(s_h, a_h)$ and $\pi(\tau) = \prod_{h \in [H]} \pi(a_h | s_h)$. The expected reward under policy $\pi$ is defined as $J(\pi) = \mathbb{E}_{\tau \sim \pi}[r(\tau)]$. 

% \zhihan{TODO: add DCMDP}

\paragraph{Preference Model}
In RLHF, the reward signal from a trajectory $(r_1, \dots, r_H)$ is generally unobservable. Instead, the feedback from the environment is provided through a preference oracle.
In particular, given a pair of trajectories $(\tau, \tilde{\tau})$, both having initial state $s_1$, we follow the standard assumptions as in \citep{christiano2017deep, rafailov2024direct, rafailov2024r}, that the probability of $\tau$ being preferred over $\tilde{\tau}$ follows the Bradley-Terry Model \citep{bradley1952rank}, which is defined as
\begin{align}
\label{eq:btl}
    \mathbb{P}(\tau \succ \tilde{\tau} | s_1) = \frac{\exp{(r(\tau))}}{\exp{(r(\tau))} + \exp{(r(\tilde{\tau}))}}.
\end{align}
Typically, we have access to a pre-trained policy $\piref$, and the goal of RLHF is to find the policy that can maximize $J(\pi)$ while staying not too far away from $\piref$. That is, with regularization parameter $\beta>0$, we aim to maximize the following objective:
\begin{align}\label{eq:KLobj}
    J_{\beta}(\pi) &= J(\pi) - \beta \mathbb{E}_{\pi}\left[\sum_{h \in [H]} D_{KL}(\pi(\cdot| s_h) || \piref(\cdot| s_h)) \right] \nonumber\\
    &= \mathbb{E}_{\pi}\left[r(\tau) - \beta \log \frac{\pi(\tau)}{\piref(\tau)}\right].
\end{align}
Here, we denote the optimal policy of some policy class $\Pi$ as $\pi^\star_{\beta}=\argmax_{\pi\in\Pi}J_{\beta}(\pi)$.

\paragraph{Token-level MDPs.}
Given our primary focus on language models, we are particularly interested in the \textit{token-level MDP} framework, as formulated and studied by \citet{rafailov2024r}. In this context, a language model typically generates a sequence of responses from an initial prompt. The initial prompt is treated as the initial state, \( s_1 \sim \rho \), and each generated token, \( a_h \in \mathcal{A} \), is considered an action, where \( \mathcal{A} \) represents the vocabulary. Under this formulation, the state at step \( h \) is represented as \( s_h = (s_1, a_1, \dots, a_{h-1}) \), a concatenation of the initial prompt and all tokens generated before step \( h \). The state at last step, \( s_H \), can then be interpreted as the model's complete response to the initial prompt, including the prompt itself.

% Given our major focus on language models, we will be particularly interested in \textit{token-level MDP}, which was formulated and studied in \citet{rafailov2024r}. Specifically, a language model will usually generate a sequence of response from an initial prompt. During this process, we treat the initial prompt as the initial state $s_1\sim\rho$ and each generated token $a_h\in\mathcal{A}$ as the action, where $\mathcal{A}$ is the vocabulary. By following this formulation, the state at step $h$ can be represented as $s_h=(s_1, a_1, \dots, a_{h-1})$ a concatenation of initial prompt and tokens generated before step $h$ and $s_H$ can be interpreted as the response of this language model to the initial prompt $s_1$ plus this prompt itself. 

Clearly, from this formulation, the transition is deterministic from $s_h$ to $s_{h+1}$ given action $a_h$ since it is simply a concatenation. This is summarized as \textit{Deterministic Contextual MDPs} (DCMDPs) in \citet{xie2024exploratory}, which proposes that the optimal policy $\pi^\star_{\beta}$ of the objective \eqref{eq:KLobj} in a DCMDP satisfies
\begin{equation}
    \label{eq:DCMDP}
    \beta\log\frac{\pi^\star_{\beta}(\tau)}{\piref(\tau)}=r(\tau) - V^\star_{\beta}(s_1),\quad \forall\tau,
\end{equation}
where $V^\star_{\beta}=\max_{\pi\in\Pi}J_\beta(\pi)$. More details of DCMDP can be found in \citet{xie2024exploratory}.

To optimize objective \eqref{eq:KLobj}, two primary approaches are available: offline RLHF and online RLHF. The choice between these methods depends on the specific requirements of the application and we will discuss both approaches in details in the following sections.
% \paragraph{Direct Preference Optimization} 

\subsection{Offline, Online and Hybrid RLHF}
\label{sec:rlhfpl}

\paragraph{Offline RLHF.} 
Offline RLHF methods are restricted to the preferences captured by the reference model $\piref$ and the offline preference dataset $\mathcal{D}_{\rm off}$. Typically the offline preference dataset is a set of labelled pairs $\mathcal{D}_{\rm off} = \{(\tau_{+}^{(i)}, \tau_{-}^{(i)})\}_{i \in [N_{\rm off}]}$. To learn a policy from this offline dataset, one of the most popular approaches is Direct Preference Optimization (DPO) introduced in \citet{rafailov2024direct}, which also serves as a starting point of our work.
DPO is motivated by a closed-form solution for the policy that optimizes the KL-regularized objective in Eq.~\eqref{eq:KLobj}, and condenses the two-step process above into a single policy optimization
objective, removing the need for reward function estimation. Concretely, DPO solves the following problem:
\fontsize{8.5}{8.5}
\begin{align}\label{eq:dpo}
    \widehat{\pi} = \argmin_{\pi \in \Pi} \sum_{(\tau_{+}, \tau_{-}) \in \mathcal{D}_{\rm off}} -\log \left[\sigma\left(\beta z(\tau_+, \tau_-, \piref, \pi) \right)\right],
\end{align}
\normalsize
where $z(\tau_+, \tau_-, \piref, \pi) = \log \frac{\pi(\tau_{+})}{\piref(\tau_{+})} - \log \frac{\pi(\tau_{-})}{\piref(\tau_{-})}$, $\sigma(x)=\frac{\exp(x)}{1+\exp(x)}$ is the sigmoid function and $\Pi$ is some user-specified policy class such as parameterized neural networks.

% \subsection{Online RLHF}

\paragraph{Online RLHF}
Although achieving great empirical success, offline DPO is inherently limited by the support of reference policy $\piref$ and offline dataset $\mathcal{D}_{\mathrm{off}}$. Therefore, people turn more attention into online DPO, which was first theoretically studied in \citet{xiong2024iterative}. Specifically, online RLHF collects preference feedback from pairs of responses generated by the language model sequentially updated during training.
Recently, \citet{xie2024exploratory} analyzed the necessity of deliberate exploration in online DPO and proposed XPO as a combination of DPO and optimistic exploration. In particular, the original DPO loss is regularized by $\alpha\sum_{i=1}^{t}\log\pi(\widetilde{\tau}^{(i)})$ in XPO to encourage the policy to be more diverse, where $\alpha>0$ is the regularization strength and $\widetilde{\tau}^{(i)}$ is the trajectory generated by $\piref$. It was analyzed under the framework of DCMDP and achieved $\widetilde{O}(1/\sqrt{T})$ optimality gap under appropriate assumptions, where $T$ represents the number of queried preference labels. Concurrently, similar theoretical guarantee was also obtained by \citet{cen2024value}.

\paragraph{Hybrid RLHF}
Although the efficiency of online RLHF is justified both theoretically and experimentally, in application, real-time query of labels can be expensive while a large amount of offline preference dataset can often be easily and cheaply obtained. Therefore, it is natural to consider a hybrid training scheme which can potentially combine the advantage of both offline and online RLHF.
From a theoretical perspective, the sample complexity of pure online RLHF depends solely on the coverage provided by $\piref$, meaning that the larger the coverage provided by $\piref$, the smaller the space to search for online RLHF, thereby requiring fewer samples.
However, pure online RLHF neglects the coverage provided by the offline preference dataset which can significantly reduce the search space. Therefore, we propose to use online feedback to explore trajectories that are not supported in $\piref$ while still taking advantage of the available offline dataset to speed up exploration. Informally, even starting with an offline dataset on which any offline algorithm may give arbitrarily poor policy, our proposed algorithm HPO will be able to use it to significantly reduce the sample complexity, as compared to XPO.

% \textbf{Informal Result.} Starting with an offline preference dataset on which any offline algorithm gives arbitrarily poor policy, we can use it to significantly reduce order of magnitude of sample complexity for the online exploration in HPO, as compared to XPO.
% \vspace{-10pt}
\section{Hybrid Preference Optimization}
\label{sec:hpo}
% \vspace{-7pt}
We now present our algorithm Hybrid Preference Optimization (HPO), which integrates the available offline dataset with online exploration. This hybrid approach leverages the strengths of both offline data and online feedback, aiming to enhance the learning process and achieve a near optimal policy using fewer samples than pure online and pure offline methods.

% \vspace{-5pt}
\subsection{Algorithm Description}
% \vspace{-7pt}
We present Hybrid Preference Optimization (HPO) in Algorithm~\ref{alg:hybrid_rlhf}. Given a policy class $\Pi$, an offline dataset $\mathcal{D}_{\rm off}$, the algorithm proceeds through a series of steps over a predefined number of iterations $T$. The algorithm begins with 2 policies $\pi^{(1)}, \tilde{\pi}^{(1)}$ and iteratively updates these policies as described next. 

At each step $t \in [T]$ of the online exploration, a context is sampled as $s_1^{(t)} \sim \rho$.   The algorithm generates 2 responses: one sampled from the current policy $\tau_t \sim \pi^{(t)}(\cdot| s_1^{(t)})$, and one sampled from the model $\tilde{\tau}_t \sim \tilde{\pi}^{(t)}(\cdot| s_1^{(t)})$. This response pair is then labelled as $(\tau_{+}^{(t)}, \tau_{-}^{(t)})$ based on human preference feedback and added to the online buffer $\mathcal{D}_{\rm on}^{(t)} = \mathcal{D}_{\rm on}^{(t)} \cup (\tau_{+}^{(t)}, \tau_{-}^{(t)})$. 

% \vspace{-5pt}
\begin{algorithm}
    \caption{Hybrid Preference Optimization}
    \begin{algorithmic}[1]
    \STATE \textbf{Input:} Offline dataset $\mathcal{D}_{\rm off}$ of size $N_{\rm off}$, sampling strategy $\pi_{\rm samp}$, hyper-parameters, $\alpha \in \mathbb{R}_{+}$, $\gamma \in \mathbb{N}$.
        \STATE \textbf{Initialize:} $\pi_t, \Tilde{\pi}_t \leftarrow \piref$, $\mathcal{D}^0_{\rm on} = \emptyset$.
        \FOR{$t = 1, \ldots, T$}
        \STATE \textbf{Generate response pair:} $s_1^{(t)} \sim \rho$; $\tau^{(t)} \sim \pi^{(t)}(\cdot|s_1^{(t)}), \taut^{(t)} \sim \Tilde{\pi}_t(\cdot|s_1^{(t)})$.
        \STATE \textbf{Label with Preference:} Label $(\tau_t, \taut_t)$ as $(\tau_t^{+}, \tau_t^{-})$ based on preference feedback. 
        \STATE \textbf{Update Online Dataset:} $\mathcal{D}^{(t)}_{\rm on} \gets \mathcal{D}^{(t-1)}_{\rm on} \cup (\tau_{+}^{(t)}, \tau_{-}^{(t)})$.
        \STATE \textbf{Update Offline dataset minibatch:} Sample $\gamma$ pairs from $\mathcal{D}_{\rm off}$ uniformly randomly with replacement $\mathcal{D}_{\rm off}^{(t)}$.
        \STATE \textbf{Update Hybrid dataset:} $\mathcal{D}_{\rm hyb}^{(t)} \gets \mathcal{D}_{\rm on}^{(t)} \cup \mathcal{D}_{\rm off}^{(t)}$.
        \STATE \textbf{Update Optimism Dataset: } Compute $\mathcal{D}^{(t)}_{\rm opt}$ of $t + \gamma$ samples from $\Tilde{\pi}_t$.
        \STATE \textbf{Update Policy:} Update $\pi_t$ to maximize likelihood of preferences seen so far + regularization terms
        \fontsize{9.5}{9.5}
        \begin{align}\label{eq:HPO}
    &\pi^{(t+1)} = \argmin_{\pi \in \Pi} \bigg[\alpha \sum_{\tau \in \mathcal{D}^{(t)}_{\rm opt}} \log \pi(\tau) - \sum_{(\tau^{+}, \tau^{-}) \in \mathcal{D}^{(t)}_{\rm hyb}}  \log \bigg[ \sigma(\beta \log \frac{\pi(\tau^{+})}{\pi_{\rm ref}(\tau^{+})} - \beta \log \frac{\pi(\tau^{-})}{\pi_{\rm ref}(\tau^{-})})\bigg]\bigg].
    \end{align}\normalsize
        \STATE \textbf{Update sampling policy:} \\$\Tilde{\pi}_{t+1} \gets \pi_{\rm samp}(\pi^{(1)}, \ldots, \pi^{(t+1)})$.
        \ENDFOR
        \STATE \textbf{Return} $\widehat{\pi} = \amax_{\pi \in \{\pi^{(1)}, \ldots, \pi^{(T)}\}} J_{\beta}(\pi)$.
    \end{algorithmic}
    \label{alg:hybrid_rlhf}
    % \vskip -0.3in
\end{algorithm}
% \vspace{-5pt}

A key parameter in HPO is $\gamma \in \mathbb{N}$, which determines the number of labeled pairs drawn from the offline dataset $\mathcal{D}_{\rm off}^{(t)}$. The hybrid preference dataset at time $t$ is the union of the online buffer and $\gamma$ pairs sampled from the offline dataset $\mathcal{D}_{\rm hyb}^{(t)} = \mathcal{D}_{\rm on}^{(t)} \cup \mathcal{D}_{\rm off}^{(t)}$. An optimism dataset $\mathcal{D}_{\rm opt}^{(t)}$ is built by sampling $t + \gamma$ samples from $\tilde{\pi}^{(t)}$. 
% We discuss the roles of $\mathcal{D}_{\rm opt}^{(t)}$ and $\mathcal{D}_{\rm hyb}^{(t)}$ shortly.

Now, we update the policy via an optimistic variant of DPO, and similar to XPO as in Eq.~\eqref{eq:HPO}. Here $\alpha$ is an optimism parameter, and the first term in Eq.~\eqref{eq:HPO} tries to encourage the policy to explore more diverse responses compared to those generated by $\tilde{\pi}^{(t)}$ and captured by the samples in $\mathcal{D}_{\rm opt}^{(t)}$. The second term in Eq.~\eqref{eq:HPO} is the DPO objective on the hybrid preference dataset $\mathcal{D}_{\rm hyb}^{(t)}$ which aligns the policy to the human preferences. 

If $\gamma = 0$, the algorithm is exactly the same as the XPO in \citep{xie2024exploratory} and it discards all the available offline data during training. We will later show in Theorem~\ref{thm:main_general} how the choice of $\gamma$ shows up in our sample complexity bounds. Intuitively the value of $\gamma$, biases the policy to the behavior of the offline dataset. A too large $\gamma$ will put a large emphasis on the offline dataset and may hinder exploration, while a too small $\gamma$ 
 will fail to effectively utilize the available information in the offline dataset. Thus, $\gamma$ needs to be chosen in accordance to the online exploration budget $T$.

 Finally, we can see that HPO is very friendly to application as it is rooted from the vanilla DPO and the extra regularization term and sampling procedure can be easily implemented.

 % \vspace{-6pt}
\subsection{Assumptions}
% \vspace{-7pt}
To establish sample complexity guarantees for HPO, we adopt standard statistical assumptions. The first of these assumptions requires that the policy class $\Pi$ be sufficiently expressive to represent the optimal KL-regularized policy.
\begin{assumption}[Policy realizability]
  \label{ass:realizability}
  The policy class $\Pi$ contains the optimal policy, i.e., $\pistarb\in\Pi$.
\end{assumption}
Policy realizability is a standard assumption for sample-efficient reinforcement learning which can be found in prior works such
\citep{agarwal2019reinforcement,lattimore2020bandit,foster2023foundations}. It is equivalent to a form of reward/value realizability as discussed in \citep{xie2024exploratory}. 
In our context, $\Pi$ will typically correspond to a class of language models with fixed architecture and variable weights. 

Next, we make a regularity assumption on the
policies in $\Pi$ \citep{rosset2024direct, xie2024exploratory}.
\begin{assumption}[Bounded density ratios]\citep{xie2024exploratory}
  \label{ass:vmax}
For all $\pi\in\Pi$ and trajectories
  $\tau=(s_1,a_1),\ldots,(s_H,a_H)$, it holds
  \begin{align}
    \left|\log\left(\frac{\pi(\tau)}{\piref(\tau)}\right)\right| \leq\frac{\Vmax}{\beta}.
         \end{align}
\end{assumption}
As discussed in \citep{xie2024exploratory}, $\Vmax$ is measurable and controllable in practice. Specifically for log-linear policies where
  $\pi(a\mid{}s)\propto\exp({\nicefrac{f(s,a)}{\beta}})$ with some linear function $f(s, a)$, it can be shown that $\Vmax\approxleq{}\Rmax$ (recall that $\Rmax$ is the range of reward).

\subsection{Model Complexity Measures}
To measure the algorithm's convergence rate towards an optimal policy, we introduce an exploration criterion that limits how frequently the algorithm encounters novel samples where it cannot effectively distinguish between the impact of $\pi$ and $\piref$ on the objective.

First, we define a preference-based analogue of the \emph{Sequential Extrapolation Coefficient} (SEC)\footnote{This is also known as an Eluder coefficient or decoupling coefficient \citep{zhong2022gec,ye2024theoretical}.} from \citep{xie2023role} that corresponds to the hybrid RLHF case. Particularly, define
\fontsize{9}{9}
\begin{align*}
    g^{(\pi)}(\tau, \taut) = \left[\beta \log \frac{\pi(\tau)}{\piref(\tau)} - r(\tau) - \beta \log \frac{\pi(\taut)}{\piref(\taut)} + r(\taut) \right]
\end{align*}
\normalsize
as a measure of the difference in the objective from Eq.~\eqref{eq:KLobj} for an arbitrary policy $\pi \in \Pi$ and two arbitrary trajectories $\tau$ and $\taut$.

For a pair of policies $\pi$ and $\Tilde{\pi}$, we define $\pi\otimes\Tilde{\pi}$ as
the joint policy that, given $s_1$, samples $\tau\sim{}\pi\mid{}s_1$
and $\taut \sim{} \Tilde{\pi} \mid{}s_1$. We write
$(\tau,\taut)\sim{}\pi\otimes\Tilde{\pi} \mid{}s_1$ as a shorthand for this process. We further let $\Tilde{\pi}\ind{t}= \samp(\pi\ind{1},\ldots,\pi\ind{t})$, and $\bmu\ind{t} = \frac{1}{t-1}\sum_{i=1}^{t-1} \pi\ind{i}\otimes \Tilde{\pi}\ind{i}$, with the convention that $\mu\ind{1}$ is arbitrary and $\samp$ is some sampling strategy such as $\mathsf{unif}(\cdot)$. 
We first define the online sequential exploration coefficient for online RLHF \citep{xie2024exploratory}.

\begin{definition} \citep{xie2024exploratory} Given a policy class $\Pi$, reference policy $\piref$, sampling strategy $\pi_{\rm samp}$, entropy regularization parameter $\beta > 0$, online exploration budget $T \in \mathbb{N}$, we define the sequential exploration coefficient (SEC) as: 
\fontsize{8.5}{8.5}
% \begin{align}
%     &{\rm SEC}_{\rm RLHF}(\Pi, T, \beta, \pi_{\rm samp}) = \sup_{\pi^{(1)}, \ldots, \pi^{(T)} \in \Pi} \nonumber\\&  \sum_{t=1}^T \frac{\left(\mathbb{E}_{\substack{s_1 \sim \rho, \\ \tau \sim \pi^{(t)} | s_1, \\ \taut \sim \tilde{\pi}^{(t-1)}| s_1}}[g^{(\pi^{(t)})}(\tau, \taut)]\right)^2}{\Vmax^2 \vee \left[(t - 1) \cdot \mathbb{E}_{\substack{s_1 \sim \rho, \\ (\tau, \taut) \sim \mu^{(t)} | s_1}} [\left(g^{(\pi^{(t)})}(\tau, \taut)\right)^2 ] \right]}.
% \end{align}
\begin{align}
    &{\rm SEC}_{\rm RLHF}(\Pi, T, \beta, \pi_{\rm samp}) = \sup_{\pi^{(1)}, \ldots, \pi^{(T)} \in \Pi}  \sum_{t=1}^T \frac{\left(\mathbb{E}_{\substack{s_1 \sim \rho, \\ \tau \sim \pi^{(t)} | s_1, \\ \taut \sim \tilde{\pi}^{(t-1)}| s_1}}[g^{(\pi^{(t)})}(\tau, \taut)]\right)^2}{\Vmax^2 \vee \left[(t - 1) \cdot \mathbb{E}_{\substack{s_1 \sim \rho, \\ (\tau, \taut) \sim \mu^{(t)} | s_1}} [\left(g^{(\pi^{(t)})}(\tau, \taut)\right)^2 ] \right]}.
\end{align}
\normalsize
\end{definition}

Next, we introduce another quantity to capture the coverage of the offline dataset in terms of any arbitrary policy $\pi \in \Pi$:
\begin{align}
    C^{(\pi)}_{\rm off} = \frac{1}{N_{\rm off}}\sum_{(\tau_{+}, \tau_{-}) \in \mathcal{D}_{\rm off} }\left(g^{(\pi)}(\tau_{+}, \tau_{-})\right)^2.
\end{align}
Below we define the SEC coefficient in the Hybrid RLHF case:
\begin{definition} Given a policy class $\Pi$, reference policy $\piref$, offline preference dataset $\mathcal{D}_{\rm off}$, offline sampling parameter $\gamma \in \mathbb{N}$, sampling strategy $\pi_{\rm samp}$, entropy regularization parameter $\beta > 0$, online exploration budget $T \in \mathbb{N}$, we define the sequential exploration coefficient (SEC) as: 
% \fontsize{8}{8}
% \begin{align}
%    &\coefffull=\sup_{\pi^{(1)}, \ldots, \pi^{(T)}\in \Pi} \nonumber\\&  \qquad \sum_{t=1}^T \frac{\left(\mathbb{E}_{\substack{s_1 \sim \rho,\\ \tau \sim \pi^{(t)} | s_1,\\ \taut \sim \tilde{\pi}^{(t-1)}| s_1}}[g^{(\pi^{(t)})}(\tau, \taut)]\right)^2}{\Vmax^2 \vee \left[(t - 1) \cdot \mathbb{E}_{\substack{s_1 \sim \rho,\\ (\tau, \taut) \sim \mu^{(t)} | s_1}}[(\widetilde{g}_{\rm off}^{(\pi^{(t)}}))^2 ] \right]},
% \end{align}
\begin{align}
   &\coefffull=\sup_{\pi^{(1)}, \ldots, \pi^{(T)}\in \Pi}   \qquad \sum_{t=1}^T \frac{\left(\mathbb{E}_{\substack{s_1 \sim \rho,\\ \tau \sim \pi^{(t)} | s_1,\\ \taut \sim \tilde{\pi}^{(t-1)}| s_1}}[g^{(\pi^{(t)})}(\tau, \taut)]\right)^2}{\Vmax^2 \vee \left[(t - 1) \cdot \mathbb{E}_{\substack{s_1 \sim \rho,\\ (\tau, \taut) \sim \mu^{(t)} | s_1}}[(\widetilde{g}_{\rm off}^{(\pi^{(t)}}))^2 ] \right]},
\end{align}
% \normalsize
% where $(\widetilde{g}_{\rm off}^{(\pi)})^2 = \left(g^{(\pi^{(t)})}(\tau, \taut)\right)^2 + \gamma \cdot C^{(\pi^{(t)})}_{\rm off}$.
where $(\widetilde{g}_{\rm off}^{(\pi)})^2 = \left(g^{(\pi)}(\tau, \taut)\right)^2 + \gamma \cdot C^{(\pi)}_{\rm off}$.
\end{definition}

\begin{remark}
    Note that because of the additional quantity $C_{\rm off}^{(\pi)}\geq 0$, for all $\gamma \in \mathbb{N}$ and any non-empty $\mathcal{D}_{\rm off}$, we have
    % \begin{align*}
    %     &\coefffull\\
    %     &\quad < {\rm SEC}_{\rm RLHF}(\Pi, T, \beta, \pi_{\rm samp}).
    % \end{align*}
    \begin{align*}
        &\coefffull < {\rm SEC}_{\rm RLHF}(\Pi, T, \beta, \pi_{\rm samp}).
    \end{align*}
    In particular, $C_{\rm off}^{(\pi)}$ is a measure of coverability for an arbitrary $\pi$, contributed by the offline dataset.
\end{remark}

\subsection{Main Results}
We state the main sample complexity result of HPO below.
% \begin{theorem}\label{thm:main_general}
\begin{restatable}{theorem}{maingeneral}\label{thm:main_general}

  Suppose Assumption \ref{ass:realizability} and \ref{ass:vmax} hold.
  For any $\beta>0$ and $T\in [N]$, if we set $\alpha=c\cdot\frac{\beta}{(\Vmax + \Rmax)e^{2 \Rmax})}\cdot\sqrt{\frac{\log(|\Pi|T\delta^{-1})\log(T)}{(T + \gamma)\cdot \coefffull}}$
  for some absolute constant $c>0$,
then Algorithm \ref{alg:hybrid_rlhf} ensures that
  with probability at least $1-\delta$, we have
%   \begin{align*}
%     & J_{\beta}(\pi^\ast_\beta) - J_{\beta}(\widehat{\pi})
%     \approxleq
% (\Vmax+\Rmax)e^{2\Rmax}\cdot \\
% &\quad\sqrt{\frac{(1 + \gamma / T)\cdot\mathsf{SEC}\cdot\log(|\Pi|T\delta^{-1})\log(T)}{T}},
%   \end{align*}
  \begin{align*}
    & J_{\beta}(\pi^\ast_\beta) - J_{\beta}(\widehat{\pi})
    \approxleq
(\Vmax+\Rmax)e^{2\Rmax}\cdot \quad\sqrt{\frac{(1 + \gamma / T)\cdot\mathsf{SEC}\cdot\log(|\Pi|T\delta^{-1})\log(T)}{T}},
  \end{align*}
  where $\mathsf{SEC} = \coefffull$.
  \end{restatable}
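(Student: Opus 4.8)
The plan is to adapt the optimism-based analysis of \algshort from \citet{xie2024exploratory} to the hybrid setting, where the one genuinely new ingredient is tracking how the per-round offline minibatch $\mathcal{D}_{\rm off}^{(t)}$ enters the concentration argument and thereby enlarges the effective second-moment denominator defining $\coeff$. First I would express the per-iteration suboptimality $\Jb(\pistarb)-\Jb(\pi^{(t)})$ through the discrepancy function $g^{(\pi^{(t)})}$. The DCMDP characterization \eqref{eq:DCMDP} gives $g^{(\pistarb)}(\tau,\taut)=0$ for any pair sharing an initial state (the $\Vstarb(s_1)$ baselines cancel), so a performance-difference argument reduces the value gap to the on-policy expectation $\E_{s_1\sim\rho,\,\tau\sim\pi^{(t)},\,\taut\sim\tilde\pi^{(t-1)}}\bigl[g^{(\pi^{(t)})}(\tau,\taut)\bigr]$, which is exactly the numerator appearing in each summand of the SEC.

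Next I would prove an \emph{optimism lemma}: the regularizer $\alpha\sum_{\tau\in\mathcal{D}_{\rm opt}^{(t)}}\log\pi(\tau)$ forces the implied value of $\pi^{(t+1)}$ to over-estimate $\Vstarb$. Examining the first-order optimality conditions of \eqref{eq:HPO} and using that $\pistarb$ is feasible (Assumption~\ref{ass:realizability}), one shows the learned implied value dominates the true optimum up to a bias controlled by $\alpha$ times the size $t+\gamma$ of the optimism dataset. This is what converts the suboptimality gap into a quantity the algorithm can certify through its own loss.

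The crux is the concentration step. The DPO loss on the hybrid dataset must be shown to concentrate around its population version uniformly over $\Pi$. For the adaptively collected online pairs I would invoke a Freedman-type martingale inequality, whose predictable quadratic variation at round $t$ is $(t-1)\cdot\E_{\mu^{(t)}}[(g^{(\pi^{(t)})})^2]$; the logistic curvature of $\sigma$ together with the density-ratio bound of Assumption~\ref{ass:vmax} produces the factor $(\Vmax+\Rmax)e^{2\Rmax}$. The $\gamma$ offline pairs drawn \iid from $\mathcal{D}_{\rm off}$ each round contribute an independent variance term whose conditional mean is exactly $\gamma\cdot C_{\rm off}^{(\pi^{(t)})}$; adding this to the online variation yields the combined second moment $(\widetilde g_{\rm off}^{(\pi^{(t)})})^2$ that defines $\coeff$. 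A union (or covering) bound over $\Pi$ supplies the $\log(|\Pi|T\delta^{-1})$ factor.

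Combining optimism with concentration, a Cauchy--Schwarz step turns the cumulative regret $\sum_{t=1}^T[\Jb(\pistarb)-\Jb(\pi^{(t)})]$ into $\sqrt{\coefffull}$ times the square root of the cumulative certified variance, which the optimism lemma bounds in terms of $1/\alpha$. Balancing the resulting optimism bias (increasing in $\alpha$ and in $T+\gamma$) against this certified estimation error (decreasing in $\alpha$) recovers the stated choice of $\alpha$ and, after dividing by $T$ and using $\Jb(\widehat\pi)\ge\frac1T\sum_t\Jb(\pi^{(t)})$, the final rate $\sqrt{(1+\gamma/T)\,\mathsf{SEC}\,\log(|\Pi|T\delta^{-1})\log T/T}$. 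I expect the main obstacle to be this concentration step: one must place the fixed offline samples and the adaptive online samples inside a single self-normalized martingale argument and verify that the offline source enters \emph{precisely} as the additive $\gamma\cdot C_{\rm off}^{(\pi)}$ term in the effective variance --- rather than as an uncontrolled perturbation --- so that the denominator of the hybrid SEC emerges exactly as defined.
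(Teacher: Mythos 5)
Your proposal is correct and follows essentially the same route as the paper's proof: the XPO-style regret decomposition with an optimism term, a martingale concentration argument in which the per-round offline minibatch enters the predictable variance as the additive $\gamma\cdot C_{\rm off}^{(\pi)}$ term (the paper packages this as the mixture distribution $\bm{\mu}^{(t)}_{\rm hyb}$), and a decoupling-plus-balancing step yielding the stated choice of $\alpha$ and final rate. The only cosmetic differences are that the paper decouples via per-round AM-GM rather than Cauchy--Schwarz and certifies optimism directly from the minimizer property of $\pi^{(t)}$ rather than first-order optimality conditions.
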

% \end{theorem}
Note that $\gamma=0$ represents the pure online setting, which is studied in XPO \citep{xie2024exploratory}. Here, we can notice that the sample complexity is reduced because the SEC for hybrid RLHF becomes smaller than its pure online counterpart. That is, the size of the space that needs to be searched through online exploration becomes smaller because of the available offline dataset. 
% We will elaborate this benefit in next section with a more concise example in linear MDPs.

% Main points:
% \begin{enumerate}
%     \item The $C_{\rm off}^{(\pi)}$ term is a measure of coverability for an arbitrary $\pi$.
%     \item SEC for hybrid smaller -- less space to search because of available offline dataset
% \end{enumerate}
\section{Breaking Lower Bounds through Hybrid Learning} \label{sec:lb}
% \zhihan{Is this section title too long and too detailed? Maybe something like ``Breaking Lower Bounds through Hybrid Learning''?}

In order to paint a clearer picture of why Hybrid RLHF is more sample-efficient than pure online and pure offline RLHF, we consider the special case of linear MDPs and study the suboptimality gaps of hybrid, online and offline RLHF. Our main goal is to show that the upper bounds for hybrid RLHF are smaller than the minimax lower bounds for pure online and pure offline RLHF. 

We begin by formally defining the linear MDP setting.
 \begin{definition}(Linear MDP \citep{jin2020provably}) 
In a Linear MDP, the transition probability as well as the reward are linear functions of a known feature map $\phi(s,a) \in \mathbb{R}^d$, where $\|\phi\|_2 \leq 1$ and
\begin{align*} 
P(s' | s, a) = \phi(s, a)^\top \mu(s') \quad r(s, a) = \phi(s, a)^\top \nu.
\end{align*}
Here, $\mu(s')$ is an unknown feature map with $\|\sum_{s'} \mu(s')\|_2 \leq \sqrt{d}$, and $\nu \in \mathbb{R}^d$ is an unknown parameter with $\|\nu\|_2 \leq 1$.
% where $d_{\rm hyb} = $
\end{definition}

% \subsection{Single Policy Concentrability limits offline RLHF}
\subsection{Lower Bounds for Pure Offline and Online RLHF}
Define $\Lambda_{\rm off} = \frac{1}{N_{\rm off}}\sum_{(\tau, \taut) \in \mathcal{D}_{\rm off}}(\phi(\tau) - \phi(\taut))(\phi(\tau) - \phi(\taut))^\top$ as the empirical feature covariance matrix of the offline preference dataset. Let $\nu^ \ast = \frac{\mathbb{E}_{s \sim \rho, \tau \sim \pi^\ast_\beta(\cdot| s)}[\phi(s, \tau)]}{\|\mathbb{E}_{s \sim \rho, \tau \sim \pi^\ast_\beta(\cdot| s)}[\phi(s, \tau)]\|_2}$ denote a unit vector corresponding to the feature projection along the optimal policy $\pi^\ast_\beta$. We define the following quantity, for a given instance, i.e. a linear MDP $\mathcal{M} = \{\phi, \mu, \nu\}$ and offline dataset $\mathcal{D}_{\rm off}$:
\begin{align*}
    \rm{C}^{\ast}(\mathcal{M}, \mathcal{D}_{\rm off}) = \|\Lambda_{\rm off}^{-\frac{1}{2}} \nu^\ast\|_{2}.
\end{align*}
This measures the coverage of the offline dataset with respect to the optimal policy $\pi^\ast_\beta$. A Larger value indicates poorer coverage. Since for any arbitrary instance this quantity may be unbounded above, we consider all families of linear MDPs such that the concentrability is at most $\Delta$, i.e. 
\begin{align*}
    \rm{CB}(\Delta) = \{\mathcal{M}, \mathcal{D}_{\rm off} | \rm{C}^{\ast}(\mathcal{M}, \mathcal{D}_{\rm off}) \leq \Delta\}. 
\end{align*} Then by extending results from \cite{li2022pessimism}, there exists an instance in $\rm{CB}(\Delta)$ such that the suboptimality for any algorithm using $\ell_2$ confidence sets is lower bounded by :
\begin{align*}
    J_{\beta}(\pi^\ast_\beta) - J_{\beta}(\widehat{\pi}) \geq \Omega\left(\sqrt{\frac{d}{N_{\rm off}}} \Delta \right). 
\end{align*}
% We define the single policy concentrability coefficient for a given instance, i.e. a linear MDP $\mathcal{M} = \{\phi, \mu, \nu\}$ and offline dataset $\mathcal{D}_{\rm off}$ w.r.t. the $\ell_q$ norm as:
% \begin{align*}
%     \rm{C}^{\ast}_q(\mathcal{M}, \mathcal{D}_{\rm off}) = \|\Lambda_{\rm off}^{-\frac{1}{2}}\mathbb{E}_{s \sim \rho, \tau \sim \pi^\ast_\beta(\cdot| s)}[\phi(s, \tau)]\|_{q}.
% \end{align*}
% This measures how well the offline dataset covers samples along the optimal policy.

% Consider all families of linear MDPs such that the concentrability is at most $\lambda$, i.e. 
% \begin{align*}
%     \rm{CB}_{q}(\lambda) = \{\mathcal{M}, \mathcal{D}_{\rm off} | \rm{C}^{\ast}_q(\mathcal{M}, \mathcal{D}_{\rm off}) \leq \lambda\}. 
% \end{align*}Then by \citep{li2022pessimism}, there exists an instance in $\rm{CB}_{q}(\lambda)$ such that the suboptimality for any algorithm using $\ell_q$ confidence sets is lower bounded by :
% \begin{align*}
%     J_{\beta}(\pi^\ast_\beta) - J_{\beta}(\widehat{\pi}) \geq \Omega(\sqrt{\frac{d^{2-2/q}}{N_{\rm off}}} \lambda).
% \end{align*}
% For the class of instances where $\lambda = \mathcal{O}(d^{1/q})$, the suboptimailty is lower bounded by $\Omega(\sqrt{\frac{d^2}{N_{\rm off}}})$.  

Thus, by choosing $\Delta = \mathcal{O}(\sqrt{d})$ even in the restricted class $\rm{CB}(\mathcal{O}(\sqrt{d}))$, the suboptimality lower bound is $\Omega\left(\sqrt{\frac{d^2}{N_{\rm off}}}\right)$. We formally state this as a minimax lower bound for the pure offline case: 
\begin{theorem}[Sample Complexity Lower Bound for Offline RLHF]
 \label{thm:lboff}
For any algorithm $\cA$, there exists an instance of offline RLHF problem and choice of $\beta$, such that if $\hat \pi_n^{\cA}$ is the policy returned by $\cA$ after any $n \geq d^2$ pairwise preference samples, then $J_\beta(\pi^*_\beta) - J_\beta(\hat \pi_n^{\cA}) \geq \Omega\left(\sqrt{\frac{d^2}{n}}\right)$.
\end{theorem}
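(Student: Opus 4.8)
The plan is to reduce to the reward-based minimax lower bound for offline linear RL of \cite{li2022pessimism} and upgrade it along two axes — from direct reward observations to Bradley–Terry preference feedback \eqref{eq:btl}, and from the unregularized value $J$ to the KL-regularized value $\Jb$ — and then to optimize the construction over the concentrability budget $\Delta$.

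First I would instantiate the hard family underlying \cite{li2022pessimism}: a feature map $\phi$ and a fixed offline dataset $\mathcal{D}_{\rm off}$ whose pair-difference covariance $\Lambda_{\rm off}=\frac{1}{N_{\rm off}}\sum_{(\tau,\taut)\in\mathcal{D}_{\rm off}}(\phi(\tau)-\phi(\taut))(\phi(\tau)-\phi(\taut))^\top$ is degenerate in the direction $\nu^\ast$ aligned with the optimal policy, so that $C^\ast(\mathcal{M},\mathcal{D}_{\rm off})=\|\Lambda_{\rm off}^{-1/2}\nu^\ast\|_2$ attains the budget $\Delta$. This family supplies a pair (or hypercube) of instances sharing $\phi,\mu$ and differing only in the reward component along $\nu^\ast$, $\nu_1-\nu_0=\epsilon\,\nu^\ast$; they are statistically close on $\mathcal{D}_{\rm off}$ yet have optimal values separated by the amount producing the reward-based rate $\Omega(\sqrt{d/N_{\rm off}}\cdot\Delta)$. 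The $\sqrt d$ here is intrinsic to estimating the $d$-dimensional reward parameter over a degenerate design, so I inherit it directly and need only re-establish indistinguishability under the new feedback model.

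Second I would show that preference feedback is information-theoretically no richer than the reward differences it encodes. Under \eqref{eq:btl} each labelled pair is Bernoulli with parameter $\sigma\big((\phi(\tau_+)-\phi(\tau_-))^\top\nu\big)$, so $\mathcal{D}_{\rm off}$ is exactly a logistic-regression design on $\nu$ with the same pair-difference directions as the reward model. Since rewards lie in $[0,\Rmax]$ the sigmoid derivative is bounded below by $c(\Rmax)>0$ on the relevant range, and the KL between the preference laws of the two instances is at most $c(\Rmax)\,N_{\rm off}\,\epsilon^2\,(\nu^\ast)^\top\Lambda_{\rm off}\,\nu^\ast$ — the very quadratic form controlling the reward-observation model. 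Hence the hypotheses stay indistinguishable for the same $\epsilon$, and the preference-based bound matches $\Omega(\sqrt{d/N_{\rm off}}\cdot\Delta)$; taking $\Rmax=\Theta(1)$ in the construction keeps $c(\Rmax)$ an absolute constant, so no $\Rmax$ factor survives into the final rate.

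Finally I would transfer the separation from $J$ to $\Jb$ and optimize over $\Delta$. Via the DCMDP identity \eqref{eq:DCMDP} both $\pistarb$ and $\Vstarb$ are explicit functions of the reward, so choosing $\beta$ small relative to the induced reward gap lets the $\Jb$-suboptimality inherit the separation up to constants: any returned $\widehat\pi$ must be $\Omega(\sqrt{d/N_{\rm off}}\cdot\Delta)$-suboptimal on at least one instance. Taking $\Delta=\mathcal{O}(\sqrt d)$ — the largest concentrability still realizable inside the linear-MDP class, given $\|\nu\|_2\le1$ and $\|\phi\|_2\le1$ — converts this into $\Omega(\sqrt{d^2/N_{\rm off}})$, and identifying $N_{\rm off}$ with the number of preference samples $n$ yields the claim for all $n\ge d^2$. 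The main obstacle I expect is precisely this upgrade: ensuring indistinguishability survives the \emph{nonlinear} Bradley–Terry link rather than only the linear reward observations, and that the KL regularizer does not contract the value separation — which is where the bounded-Fisher-information estimate and the careful choice of $\beta$ are essential.
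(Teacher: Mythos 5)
Your proposal is correct in outline, but it takes a genuinely different route from the paper. The paper never re-opens the hard-instance construction: it observes that offline RLHF with Bradley--Terry labels is exactly an offline BTL dueling-bandit problem, and then invokes the simulation lemma of \citet{saha2021optimal} (restated as \cref{lem:red}): if two reward observations carry i.i.d.\ Gumbel$(0,1)$ noise, the indicator $\mathbf 1(r(\a)>r(\b))$ is distributed \emph{exactly} as a BTL preference label. Consequently, any algorithm for the offline preference problem can be run on suitably paired offline reward data, and the offline linear-bandit lower bound of \citet[Theorem 2]{li2022pessimism} transfers black-box (at a factor-$2$ cost in sample accounting); see \cref{app:lboff}. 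You instead argue white-box: you keep the degenerate-design instances of \citet{li2022pessimism}, replace the Gaussian-reward KL computation by a Bernoulli KL bound using the boundedness of the logistic link's variance $\sigma'(u)=\sigma(u)(1-\sigma(u))$ on the range induced by $\Rmax$, and then run Le Cam/Assouad and transfer the separation to $J_\beta$ by taking $\beta$ small. Each approach buys something. The paper's reduction is short and modular, but it carries a subtlety your argument avoids: the simulation lemma forces the reward noise to be Gumbel, so the cited reward-based lower bound must hold (or be re-proven) for Gumbel-noise instances rather than the Gaussian ones such results are typically stated for; your Bernoulli-KL argument works natively in the preference model, applies to \emph{any} algorithm by pure information-theoretic indistinguishability, and makes the $\Rmax$ and $\beta$ dependence explicit. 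The price is that you must actually carry out the hypercube construction and KL computations the paper treats as a black box. Two minor points to tighten: the constant in your KL bound should be $1/c(\Rmax)$ (the lower bound on the Bernoulli variance sits in the denominator of $(p-q)^2/\bigl(q(1-q)\bigr)$, not as a multiplicative factor), and you should verify that the pair-difference design of the offline preference dataset realizes the same degenerate covariance $\Lambda_{\rm off}$ that the single-action design of \citet{li2022pessimism} uses, since the quadratic form controlling indistinguishability is over $\phi(\tau_+)-\phi(\tau_-)$ rather than over individual features.
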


% \subsection{Lower Bounds for Online RLHF}
We now state a minimax lower bound for the pure online case: 
% \avi{Aadirupa TODO}.

\begin{theorem}[Sample Complexity Lower Bound for Online RLHF]
 \label{thm:lbon}
 For any algorithm $\cA$, there exists an instance of online RLHF problem and choice of $\beta$, such that if $\hat \pi_T^{\cA}$ is the policy returned by $\cA$ after $T$ rounds, then $J_\beta(\pi^*_\beta) - J_\beta(\hat \pi_T^{\cA}) \geq \Omega\left(\sqrt{\frac{d^2}{T}}\right)$.
\end{theorem}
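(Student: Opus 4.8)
The plan is to reduce online RLHF on linear MDPs to a preference-based (dueling) linear bandit and then prove the bound by Assouad's lemma. I would work with the bandit special case ($H=1$) of the token-level MDP, with action set $\{-1,+1\}^d$, feature map $\phi(a)=a/\sqrt d$ (so $\|\phi\|_2=1$), uniform reference policy $\piref$, and reward $r(a)=\langle\phi(a),\nu_\epsilon\rangle$, where the hidden parameter is $\nu_\epsilon=\lambda\,\epsilon/\sqrt d$ for $\epsilon\in\{-1,+1\}^d$ and a scale $\lambda\le 1$ to be tuned. By the Bradley--Terry model \eqref{eq:btl}, querying a pair $(\tau_t,\tilde\tau_t)$ returns a single bit distributed as $\mathrm{Ber}(\sigma(\langle x_t,\nu_\epsilon\rangle))$ with $x_t=\phi(\tau_t)-\phi(\tilde\tau_t)$ and $\|x_t\|_2\le 2$; thus the whole interaction is exactly a logistic linear bandit whose effective action is the difference vector $x_t$.

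Next I would bound the information gained per query. Since the instances lie near $\nu=0$ where $\sigma'(0)=1/4$, a second-order expansion gives $D_{\mathsf{KL}}(\mathrm{Ber}(\sigma(\langle x,\nu_\epsilon\rangle))\,\|\,\mathrm{Ber}(\sigma(\langle x,\nu_{\epsilon'}\rangle)))\lesssim\langle x,\nu_\epsilon-\nu_{\epsilon'}\rangle^2$. For two instances differing only in coordinate $j$ we have $\nu_\epsilon-\nu_{\epsilon'}=(2\lambda/\sqrt d)\,e_j$, so by the KL chain rule along $\cA$'s filtration the total information separating them over the $T$ adaptively chosen queries is $\lesssim(\lambda^2/d)\sum_t\mathbb{E}[(x_t)_j^2]$. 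Averaging over $j$ and using $\sum_j(x_t)_j^2=\|x_t\|_2^2\le 4$ gives an average per-coordinate KL of order $\lambda^2 T/d^2$.

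Then I would apply Assouad on the hypercube. Using the exact identity for the KL-regularized objective $J_\beta(\pi^*_\beta)-J_\beta(\pi)=\beta\,\mathbb{E}_{s_1}[D_{\mathsf{KL}}(\pi(\cdot|s_1)\,\|\,\pi^*_\beta(\cdot|s_1))]$, which follows from \eqref{eq:DCMDP}, one checks that under the uniform $\piref$ both $\pi^*_\beta$ and the optimal action factorize across coordinates, so the suboptimality decomposes additively: each coordinate whose sign the returned policy $\hat\pi_T^{\cA}$ fails to align with $\epsilon_j$ costs $\Omega(\lambda/d)$ in $J_\beta$ once $\beta=\Theta(1/\sqrt T)$ is matched to the gap scale (equivalently, taking $\beta$ small makes $J_\beta$ agree with the reward objective up to constants, reducing the loss to the reward gap $\Theta(\lambda)$). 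Combining with the per-coordinate KL bound, Assouad yields $\sup_\epsilon\mathbb{E}[J_\beta(\pi^*_\beta)-J_\beta(\hat\pi_T^{\cA})]\gtrsim\lambda\,(1-\sqrt{\lambda^2 T/d^2})$. Choosing $\lambda=c\,d/\sqrt T$ for a small enough constant $c>0$ keeps the information term bounded away from $1$, and since $\lambda\le 1$ requires $T\ge d^2$, this produces the claimed $\Omega(\sqrt{d^2/T})$.

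The main obstacle I expect is handling the adaptivity of the queries $x_t$ together with the logistic link inside Assouad: the $x_t$ are chosen by $\cA$ from past feedback and their law differs across instances, so the per-coordinate information must be controlled through the chain rule along $\cA$'s filtration rather than via a static design, and the gaps must be kept small enough to stay in the linear regime of $\sigma$ where the KL bound holds. A secondary technical point is converting a parameter-level gap into a genuine $J_\beta$-gap; the identity $J_\beta(\pi^*_\beta)-J_\beta(\pi)=\beta\,\mathbb{E}[D_{\mathsf{KL}}(\pi\,\|\,\pi^*_\beta)]$ makes this transparent and fixes the correct $\beta$, but one must still verify that a policy failing to identify $\epsilon$ is necessarily bounded away from $\pi^*_\beta$ in this divergence, which again follows from the coordinate-wise factorization of $\pi^*_\beta$ under the uniform reference.
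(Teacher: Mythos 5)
Your proposal is correct, and it reaches the bound by a genuinely different route than the paper. The paper's proof is a reduction argument: it observes that the BTL preference oracle makes online RLHF exactly a linear-score dueling bandit, invokes the simulation lemma of \citet{saha2021optimal} (two reward observations corrupted by iid Gumbel$(0,1)$ noise produce a comparison bit whose law is exactly the BTL preference probability), concludes that any dueling-bandit algorithm can be executed inside an ordinary linear-bandit environment at a factor-two cost in samples, and then imports the online best-arm-identification lower bound of \citet[Theorem 2]{wagenmaker2022reward}. You instead build the hard family from scratch --- hypercube actions with $\phi(a)=a/\sqrt{d}$ and parameters $\nu_\epsilon=\lambda\epsilon/\sqrt{d}$ --- control the adaptively chosen preference queries through the KL chain rule along the algorithm's filtration, run Assouad's lemma, and finally convert sign misidentification into a gap in the regularized objective via the identity $J_\beta(\pi^{\star}_{\beta})-J_\beta(\pi)=\beta\,\mathbb{E}_{s_1}\left[D_{KL}\left(\pi(\cdot|s_1)\,\|\,\pi^{\star}_{\beta}(\cdot|s_1)\right)\right]$ together with the coordinate-wise factorization of $\pi^{\star}_{\beta}$ under the uniform reference. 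Each approach buys something. The paper's reduction is short and modular, but it effectively proves the statement at $\beta=0$ (the dueling-bandit score is the unregularized $J$), and its strength is inherited from the cited bound, which the reduction needs to hold for linear bandits with Gumbel reward noise, since that is the noise model the simulation lemma requires. Your argument is self-contained, treats strictly positive $\beta$ (which is what the theorem's statement about $J_\beta$ literally concerns), makes the implicit regime $T\gtrsim d^2$ explicit (outside it the claimed rate exceeds the reward scale), and localizes where the $d^2$ comes from: a single preference bit carries only $O(\lambda^2\|x_t\|_2^2/d)$ information about each of the $d$ coordinates, forcing $\lambda\lesssim d/\sqrt{T}$. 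The two obstacles you flag --- adaptivity of the queries, and upgrading a parameter-level error to a $J_\beta$-gap --- are handled by exactly the right tools (chain rule under the measure indexed by $\epsilon$, using $\|x_t\|_2^2\le 4$ pointwise so the bound is measure-independent; and the marginal-KL lower bound for a misaligned coordinate, which gives cost $\Omega(\lambda/d)$ once $\beta$ is a sufficiently small multiple of $\lambda/d$), so I see no gap in the plan.
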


% \subsection{Lower Bounds for Offline RLHF}
% \avi{Zhihan can we add how concentrability factors in the offline rlhf case, however our hybrid bound only depends on coverability.}

The proof of \cref{thm:lboff} and \cref{thm:lbon} respectively follows the existing lower bounds for online \citep[Theorem 2]{wagenmaker2022reward} and offline \citep[Theorem 2]{li2022pessimism} linear contextual bandits along with the reduction to idea argued in \cite[Lemma 8]{saha2021optimal}. We have added the detailed proofs in \cref{app:lbproofs}.

\subsection{Upper Bounds for Hybrid RLHF}
% Define $\Lambda_{\rm off} = \frac{1}{N_{\rm off}}\sum_{(\tau, \taut) \in \mathcal{D}_{\rm off}}(\phi(\tau) - \phi(\taut))(\phi(\tau) - \phi(\taut))^\top$ as the empirical feature covariance matrix of the offline preference dataset and 
Define $\Tilde{\Lambda}_{\rm off} = \Lambda_{\rm off} + \frac{\Vmax^2}{\gamma} \mathbf{I}$ as the empirical coverage.  
Let $\lambda^{(1)}_{\rm off}, \dots, \lambda^{(d)}_{\rm off}$ be the eigenvalues of $\Tilde{\Lambda}_{\rm off}$. Let $\{v_1, v_2, \ldots, v_d\}$ denote the orthonormal eigenvectors corresponding to $\Tilde{\Lambda}_{\rm off}$. We adopt the convention that the eigenvalues corresponding to these eigenvectors are in increasing order. For a fixed threshold $\lambda$ we can define:
\begin{align*}
d_{\rm hyb} = \max_{i \in [d]} \left\{\|\Tilde{\Lambda}_{\rm off}^{-\frac{1}{2}} v_i\|_{2} \approxleq \Omega\left(\frac{1}{\sqrt{T}}\right)\right\}. 
% \text{ \zhihan{This seems to be equivalent to $\max_{i\in[d]}\sqrt{\lambda^{(i)}_{\rm off}}\leq\Omega(1/T)$.}}
\end{align*}
$d_{\rm hyb}$ measures the coverage of the offline dataset. A larger value of $d_{\rm hyb}$ indicates poor coverage. Note that the definition of $d_{\rm hyb}$ is also equivalent to $\max_{i\in[d]}\lambda^{(i)}_{\rm off}\leq\Omega(1/T)$.
We first state an upper bound on the sample complexity of HPO.

% \begin{theorem}
% \label{thm:linear_upper}
\begin{restatable}{theorem}{linearupper}\label{thm:linear_upper}
    By choosing $\gamma = \mathcal{O}(T)$, we then have
    \begin{align*}
        J_{\beta}(\pi^\ast_\beta) - J_{\beta}(\widehat{\pi})
    \approxleq \tilde{\mathcal{O}}\left(\Rmax e^{2\Rmax} \sqrt{\frac{d\cdot d_{\rm hyb}}{T}}\right).
    \end{align*}
% \end{theorem}
\end{restatable}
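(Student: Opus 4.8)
The plan is to derive Theorem~\ref{thm:linear_upper} as a corollary of the general guarantee in Theorem~\ref{thm:main_general}, so the entire task reduces to bounding the hybrid coefficient $\coefffull$ by $\tilde{O}(d\cdot d_{\rm hyb})$ in the linear MDP. The first step is a purely algebraic reduction. In a linear MDP the trajectory reward is $r(\tau)=\langle\phi(\tau),\nu\rangle$ with $\phi(\tau)=\sum_{h}\phi(s_h,a_h)$, and for log-linear policies the log-ratio $\beta\log(\pi(\tau)/\piref(\tau))$ is, up to a term depending only on the shared initial state $s_1$, linear in $\phi(\tau)$. Since each pair $(\tau,\taut)$ in the definition of $g^{(\pi)}$ shares $s_1$, the $s_1$-dependent pieces cancel and I can write $g^{(\pi)}(\tau,\taut)=\langle w_\pi,\phi(\tau)-\phi(\taut)\rangle$ for a vector $w_\pi\in\mathbb{R}^d$ whose norm is controlled by Assumption~\ref{ass:vmax}. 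Under this representation the offline coverage term is a quadratic form, $C^{(\pi)}_{\rm off}=w_\pi^\top\Lambda_{\rm off}w_\pi$, exactly matching the empirical covariance $\Lambda_{\rm off}$, and each summand of $\coefffull$ becomes a ratio of quadratic forms in $w_{\pi^{(t)}}$.

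Next I would rewrite the $t$-th denominator as $\Vmax^2\vee w_t^\top M_t w_t$ with $M_t=\sum_{i<t}\Phi_i+(t-1)\gamma\Lambda_{\rm off}$, where $\Phi_i$ is the feature-difference second-moment under $\pi^{(i)}\otimes\tilde{\pi}^{(i)}$, $\bar{\phi}_t$ is the mean feature difference appearing in the numerator, and $w_t:=w_{\pi^{(t)}}$. Using $\Lambda_{\rm off}=\tilde{\Lambda}_{\rm off}-(\Vmax^2/\gamma)I$ and lower-bounding the floor, the denominator is, up to constants, $w_t^\top[\Vmax^2 I+\sum_{i<t}\Phi_i+(t-1)\gamma\tilde{\Lambda}_{\rm off}]w_t$, so the offline data behaves as a data-dependent regularizer whose strength in eigendirection $v_j$ of $\tilde{\Lambda}_{\rm off}$ is of order $(t-1)\gamma\lambda^{(j)}_{\rm off}$.

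The crux is then an eigenvalue split of $\tilde{\Lambda}_{\rm off}$ into \emph{well-covered} directions ($\lambda^{(j)}_{\rm off}$ large) and \emph{poorly-covered} directions ($\lambda^{(j)}_{\rm off}\lesssim 1/T$, of which there are exactly $d_{\rm hyb}$ by definition). With $\gamma=\Theta(T)$, in any well-covered direction the regularizer $(t-1)\gamma\lambda^{(j)}_{\rm off}$ dominates the floor already for $t\ge 2$, so its contribution to $\coefffull$ telescopes to $o(1)$ and such directions are effectively removed; the surviving problem lives in the $d_{\rm hyb}$-dimensional poorly-covered subspace. On that subspace I would run a standard elliptic-potential / log-determinant (trace) argument, controlling $\sum_t w_t^\top\Phi_t w_t/(w_t^\top M_t w_t)$ by $\sum_t\mathrm{tr}(M_t^{-1}\Phi_t)\le\log(\det M_{T+1}/\det M_1)$. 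The pure-online coefficient is already of order $d^2$ (consistent with the online lower bound $\sqrt{d^2/T}$ of Theorem~\ref{thm:lbon}), the extra factor $d$ arising because the first-moment feature $\bar{\phi}_t$ in the numerator, sampled under $\pi^{(t)}\otimes\tilde{\pi}^{(t-1)}$, must be decoupled from the historical second-moment matrices $\Phi_i$ under $\pi^{(i)}\otimes\tilde{\pi}^{(i)}$; offline data replaces one of these $d$ factors by the effective dimension $d_{\rm hyb}$, giving $\coefffull\lesssim\tilde{O}(d\cdot d_{\rm hyb})$. Substituting this, together with $(1+\gamma/T)=O(1)$ for $\gamma=\Theta(T)$ and $\Vmax\lesssim\Rmax$ from Assumption~\ref{ass:vmax}, into Theorem~\ref{thm:main_general} and absorbing logarithmic factors into $\tilde{O}(\cdot)$ yields $J_\beta(\pistarb)-J_\beta(\widehat{\pi})\lesssim\tilde{O}(\Rmax e^{2\Rmax}\sqrt{d\,d_{\rm hyb}/T})$.

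I expect the main obstacle to be precisely this dimension counting: cleanly decoupling the on-policy mean feature in the numerator from the historical covariance in the denominator (the mechanism forcing the extra factor of $d$ beyond the naive $d_{\rm hyb}$), and aligning the well-covered/poorly-covered threshold with the definition of $d_{\rm hyb}$ so that the regularization strength $(t-1)\gamma\lambda^{(j)}_{\rm off}$ crosses the floor at exactly the $1/T$ eigenvalue cutoff. A secondary technical point to track throughout is the mismatch between the scalar floor $\Vmax^2$ and a matrix regularizer $\Vmax^2 I$, which requires a uniform bound on $\|w_t\|$ supplied by Assumption~\ref{ass:vmax}.
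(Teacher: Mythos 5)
Your reduction to bounding the SEC and your core computation---representing $g^{(\pi)}$ linearly in feature differences, viewing the offline data as a regularizer $\gamma\Lambda_{\rm off}$ in the denominator, running a log-determinant/elliptical-potential argument, and splitting the eigenvalues of $\Tilde{\Lambda}_{\rm off}$ at the $\approx 1/T$ threshold so that only the $d_{\rm hyb}$ poorly-covered directions survive---is exactly the paper's proof. The gap is in the dimension accounting afterwards, and it consists of two errors that happen to cancel. First, your claim that the pure-online coefficient is of order $d^2$, and that an extra factor of $d$ arises from ``decoupling'' the first-moment feature in the numerator from the historical second moments in the denominator, is false: the Cauchy--Schwarz step $(\langle w_t,\bar{\phi}_t\rangle)^2 \le (w_t^\top M_t w_t)\,\|\bar{\phi}_t\|^2_{M_t^{-1}}$ incurs no dimension factor, and the trace/log-det inequality you yourself write down, $\sum_t \mathrm{tr}(M_t^{-1}\Phi_t)\le \log(\det M_{T+1}/\det M_1)$, bounds the hybrid coefficient by $\tilde{O}(d_{\rm hyb})$ with no extra $d$---this is precisely the paper's conclusion, $\coefffull \le \mathcal{O}\left(d_{\rm hyb}\log\left(1+4T/\Vmax^2\right)\right)$. (Correspondingly, the pure-online SEC for a $d$-dimensional linear class is $\tilde{O}(d)$, not $d^2$; the $d^2$ in the online lower bound of Theorem~\ref{thm:lbon} is not carried by the SEC alone.)

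Second, the factor $d$ in Theorem~\ref{thm:linear_upper} does not come from the SEC at all: it comes from the $\log(|\Pi|T\delta^{-1})$ term in Theorem~\ref{thm:main_general}. For the log-linear policy class over a $d$-dimensional linear MDP, $\Pi$ is infinite and a covering argument gives $\log|\Pi| \approx d$ up to logarithmic factors; you instead absorbed this term into $\tilde{O}(\cdot)$ as if it were polylogarithmic. Your two errors compensate: inflating the SEC to $d\cdot d_{\rm hyb}$ while deflating $\log|\Pi|$ to polylog reproduces $\sqrt{d\,d_{\rm hyb}/T}$, but if either error is corrected in isolation your argument yields either $\sqrt{d^2 d_{\rm hyb}/T}$ (too large) or $\sqrt{d_{\rm hyb}/T}$ (which the covering cost makes unattainable by this route). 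The repair is straightforward: keep your elliptical-potential bound as stated, concluding $\coefffull \le \tilde{O}(d_{\rm hyb})$, and charge the remaining factor of $d$ to the covering number of $\Pi$ when instantiating Theorem~\ref{thm:main_general}; that is the paper's derivation.
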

\begin{proof}
We highlight the key steps of this proof.
% and the detailed proof is deferred to Appendix~\ref{app:proofs}.
Using the elliptical potential lemma \cite{lattimore2020bandit}, we can derive
\begin{align*}
    &\coefffull \\
    &\leq 2  \log( |\gamma \Lambda_{\rm off} + (\Vmax^2 + 4T)\mathbf{I}|) - 2\log(|\gamma\Lambda_{\rm off} + \Vmax^2\mathbf{I}|)\\ 
    &=2  \log( |\Tilde{\Lambda}_{\rm off} + 4T\mathbf{I} / \gamma|) - 2\log(|\Tilde{\Lambda}_{\rm off}|)\\ 
    &= 2  \sum_{i \in [d]}\log\left(\lambda_{\rm off}^{(i)} +  4T/\gamma\right) -2\sum_{i \in [d]} \log\left(\lambda_{\rm off}^{(i)}\right)\\
     &\leq 2  \sum_{i \in [d]}\left(\log\left(1 + \frac{4T}{ \gamma \lambda_{\rm off}^{(i)}}\right)\right).
\end{align*}
Choosing $\gamma = \mathcal{O}(T)$, then only terms with $\lambda_{\rm off}^{(i)} \leq \Omega(\frac{1}{T})$ will remain in the summand, corresponding to the directions where the offline dataset has poor exploration. By defintion, $d_{\rm hyb}$ denote the number of indices in $[d]$ such that $\lambda_{\rm off}^{(i)} \leq \Omega(\frac{1}{T})$. Noting that $\gamma \lambda_{\rm off}^{(i)} \geq \Vmax^2 \quad \forall i \in [d]$, we get $\coefffull \leq \mathcal{O}\left(d_{\rm hyb}\log\left(1 + \frac{4T}{\Vmax^2}\right)\right)$.
Hence, the SEC scales as the effective number of dimensions yet to be explored sufficiently. Plugging this into Theorem~\ref{thm:main_general} gives the desired result.
\end{proof}

Note that $d_{\rm hyb}$ is always upper bounded by $d$. Thus for any arbitrary instance any arbitrary offline dataset, this upper bound is still $\tilde{\mathcal{O}}\left(\sqrt{\frac{d^2}{T}}\right)$.
% \subsection{Discussion}
% Here are several points worth noting from this analysis:
Furthermore, from the above analysis, it is clear that compared with pure offline and pure online RLHF, HPO has the following two advantages:
\begin{enumerate}
    \item The upper bound of HPO in Theorem \ref{thm:linear_upper} beats the lower bounds of both pure online and offline RLHF in Theorem \ref{thm:lboff} and \ref{thm:lbon}, as long as $d_{\rm hyb}$ is non-trivially smaller than $d$. That is, our proposed hybrid RLHF is provably better than both pure online and offline RLHF.
    \item To achieve this upper bound, HPO does not require any assumptions in all-policy or single-policy concentrability coefficient, which is usually imposed in traditional offline RL \citep{zhan2022offline}. To satisfy these kinds of assumptions, the behavior policy used to collect the offline dataset needs to cover all possible actions of the optimal policy, which can sometimes be stringent. However, HPO can effectively utilize the offline dataset even if the behavior policy covers no action of the optimal policy as $d_{\rm hyb}$ shrinks even through coverage in the offline datasets collected via by suboptimal policies.
\end{enumerate}
\section{Experiments}\label{sec:experiments}
\begin{figure*}[t]
\centering
\begin{subfigure}{.5\textwidth}
  \centering
  \includegraphics[width=\linewidth]{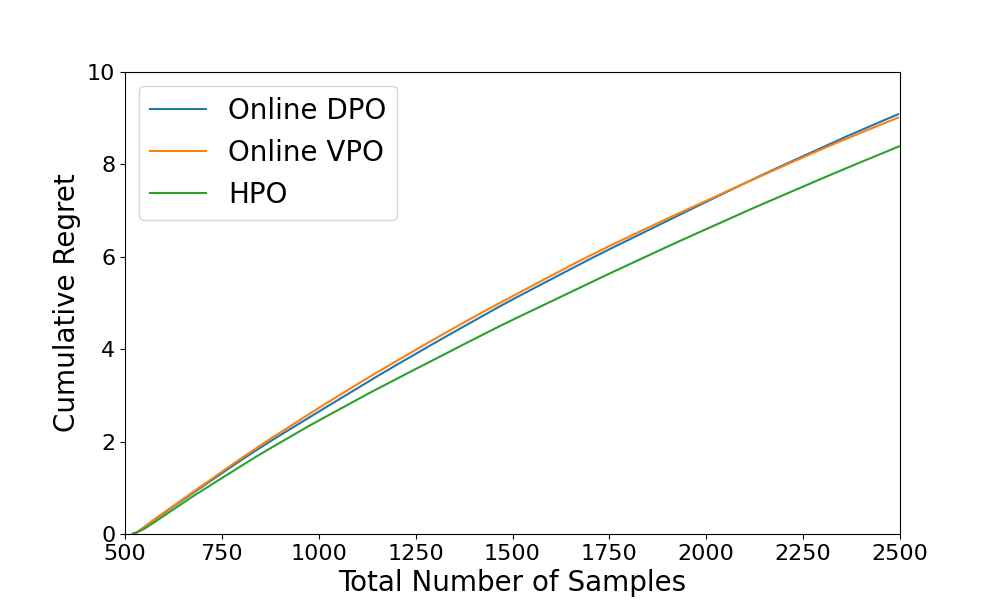}
  \caption{Cumulative Regret}
  \label{fig:sub1}
\end{subfigure}%
\begin{subfigure}{.5\textwidth}
  \centering
  \includegraphics[width=\linewidth]{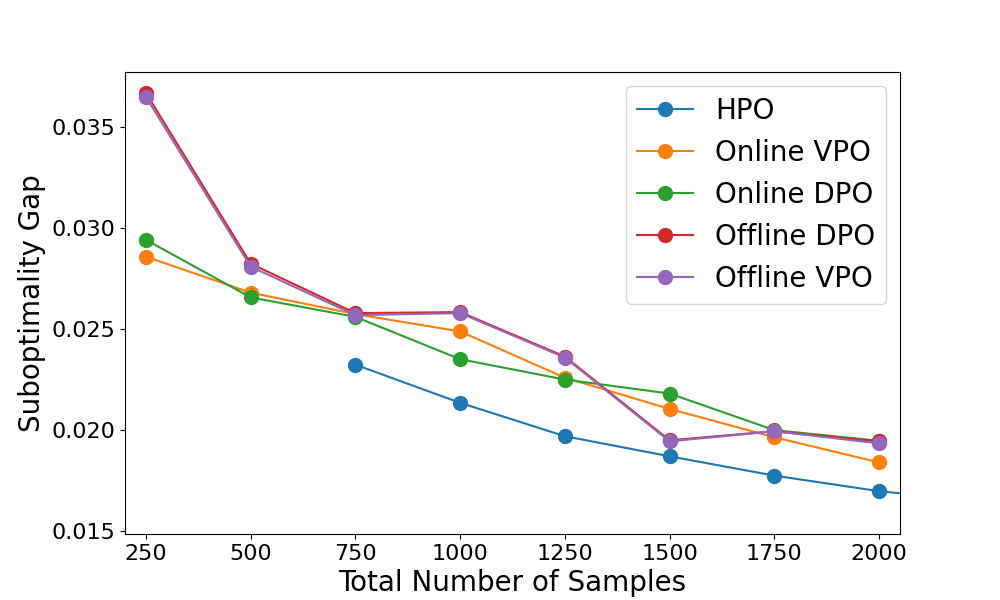}
  \caption{Suboptimality Gap}
  \label{fig:sub2}
\end{subfigure}
\caption{(Left) We plot the cumulative regret as the number of online samples grows. (Right) We plot the suboptimality gaps for all online, offline and hybrid algorithms as a function of total number of samples. Since HPO has access to offline samples, to ensure fairness while comparinng, for the hybrid case we report performance with number of offline + online samples seen. HPO used an offline dataset of size $N_{\rm off} = 500$, hence in both the plots the value corresponding to 750, is after 250 additional online samples seen, the value corresponding to 1000, is after 500 additional online samples collected by HPO and so on. }
\label{fig:test}
\end{figure*}

We evaluate the benefits of HPO on the linear contextual bandit setting considered in \citep{cen2024value}. We restate the salient features of their setup here. 
We consider a linear contextual bandit problem, where we set the prompt space as $\mathcal{X} = \mathbb{R}^2$ and the response space as $|\mathcal{Y}| = 500$ one-hot vectors. For each $(x, y)$ pair, the ground truth reward is given by $r^\star(x, y) = \langle \phi(x, y), \theta^\star \rangle$, where $\theta^\star \in \mathbb{R}^{100}$ is randomly sampled from $\mathcal{U}([0, 1])$, and the feature vector $\phi(x, y)$ is the output of the hidden layer of a fixed two-layer MLP, with the input given by the concatenation of $x$ and the one-hot encoding of $y$. The activation function is set to $\tanh$. The context vector $x$ is drawn from standard normal distribution.

We focus on log-linear policy class $\pi_\theta(\cdot|x) = \rm{softmax}(\langle \theta, \phi(x, \cdot) \rangle)$, and set $\pi_{\text{ref}} = \pi_{\theta_{\text{ref}}}$ with $\theta_{\text{ref}}(x, y)$ sampled i.i.d. from $\mathcal{U}([0, 1])$. 

\textbf{Offline Data Collection Policy:} For the offline data collection process we collected response pairs from the reference policy $\pi_{\text{ref}}$.

\textbf{Optimization Procedure:} We use mini-batch samples of size 5 in every iteration. We approximately solve the optimization problems by performing 20 AdamW optimization steps with learning rate 0.01 and weight decay rate 0.01 in every iteration for the online setting and 1000 steps for the offline setting.

\textbf{Choice of Hyperparameters:} For Online VPO, we tried hyperparameters $\alpha = \{1.0, 10.0\}$ and for offline VPO we tried hperparameters $\alpha = \{\frac{0.1}{\sqrt{N_{\rm off}}} , \frac{1.0}{\sqrt{N_{\rm off}}}, \frac{10.0}{\sqrt{N_{\rm off}}}\}$ and report the results for the best performing hyperparameter here. For HPO we try the same set of $\alpha$'s as Online VPO and set $\gamma = N_{\rm off} = 500$. 

We ask the following question:
     \textit{Starting from an offline dataset where offline algorithms like DPO \citep{rafailov2024direct}, Offline-VPO \citep{cen2024value} fail to obtain a near optimal policy, can we still utilize information from the dataset to significantly reduce the number of online exploration samples needed to obtain a near optimal policy via HPO as compared to baselines such as Online DPO \citep{guo2024direct}, Online VPO \citep{cen2024value} (XPO \citep{xie2024exploratory})?}

We answer this question affirmatively. In particular, we note the following takeaways from our experiment:
\begin{enumerate}
    \item In Figure~\ref{fig:test} (left), the cumulative regret of HPO grows much slower compared to pure online learning baseline algorithms. The value plotted for any $T$ with $T > N_{\rm off}$ is $\sum_{i = 500}^T \left(J_{\beta}(\pi^\ast_\beta) - J_{\beta}(\pi_{i})\right)$. This ensures the online algorithms have seen online $T$ samples, when we are talking about the cumulative regret at $T$, whereas the hybrid algorithm has seen $N_{\rm off}$ offline samples and $T - N_{\rm off}$ offline samples. This ensures fairness while comparing the online and hybrid approaches. 
    \item In Figure~\ref{fig:test} (right), we plot the suboptimality gaps as a function of total number of samples (online, offline or a mix of both). For the hybrid setting, we use an offline dataset of size $N_{\rm off} = 500$. Note that the suboptimality gaps for offline algorithms run on this size of a dataset is quite poor compared to other (purely online) baselines. However, using the same dataset, the suboptimality gaps for the policy produced by HPO is much better than all baselines. 
\end{enumerate}

% Select a $N_{\rm off}$ at which the suboptimality gap is large for offline algorithms like MLE, VPO. Now show suboptimality gaps for pure online and hybrid algorithms in terms of number of offline samples in a bar plot. Highlight that the hybrid algorithm produces near optimal policies with much fewer online samples.

% Plot regret curves of MLE, VPO and HPO as number of online samples, showing the regret can be significantly reduced for hybrid approach. 

% Comment on the effect of too much bias from suboptimal offline data (larger $\gamma$) can be bad. 

% One plot on total number of samples compared to VPO.

% \begin{figure*}
% \centering
% \begin{subfigure}{.5\textwidth}
%   \centering
%   \includegraphics[width=\linewidth]{}
%   \caption{Cumulative Regret}
%   \label{fig:sub1}
% \end{subfigure}%
% \begin{subfigure}{.5\textwidth}
%   \centering
%   \includegraphics[width=\linewidth]{figs/hpo_subopt.png}
%   \caption{Suboptimality Gap}
%   \label{fig:sub2}
% \end{subfigure}
% \caption{For Online VPO, we tried hyperparameters $\alpha = \{0.01, 0.1, 1.0, 10.0\}$, and for offline VPO we tried hperparameters $\alpha = \{\frac{0.1}{\sqrt{N_{\rm off}}} , \frac{1.0}{\sqrt{N_{\rm off}}}, \frac{10.0}{\sqrt{N_{\rm off}}}\}$ and report the results for the best performing hyperparameter here. For HPO we try the same set of $\alpha$'s as Online VPO and $\gamma = \{50, 100, 250\}$. }
% \label{fig:test}
% \end{figure*}

\section{Conclusion and Future Work}
\label{sec:conclusion}
In this paper, we introduced Hybrid Preference Optimization (HPO), a novel approach that effectively integrates the strengths of both online and offline Reinforcement Learning from Human Feedback (RLHF) methods. By combining online exploration with existing offline preference data, HPO addresses the limitations inherent in each approach when used in isolation. Specifically, it relaxes the stringent concentrability conditions required by offline methods and enhances the sample efficiency of online exploration. Our theoretical analysis provides the first provably optimal bounds for hybrid RLHF with preference feedback, demonstrating that HPO achieves better sample complexity than either pure offline or pure online methods.
The empirical implications of our approach suggest a promising avenue for developing more efficient and practical methods for aligning large language models with human preferences. By leveraging the wealth of existing offline data while simultaneously adapting to new information through online exploration, HPO reduces the need for extensive and costly real-time human feedback, thereby making the process more scalable and less resource-intensive.

\section*{Acknowledgements}
MF was supported in part by awards NSF TRIPODS II 2023166, CCF 2007036, CCF 2212261, CCF 2312775. SSD acknowledges the support of NSF IIS 2110170, NSF DMS 2134106, NSF CCF 2212261, NSF IIS 2143493, NSF IIS 2229881, Alfred P. Sloan Research Fellowship, and Amazon.

% \newpage
% \newpage

\medskip

\bibliography{References}

\newpage
\appendix
\onecolumn

\section{Upper Bound Proofs}\label{app:proofs}
The proof structure for Theorem~\ref{thm:main_general} is similar to Theorem 3.1 \citep{xie2024exploratory}. However due to the hybrid nature of our algorithm, we are able to derive concentration lemmas with a faster convergence rate. We highlight the main differences in our analysis, which lead to the reduced sample complexity. 
\subsection{Concentration Lemmas}
Let $\mathbb{P}_{\rm off}$ denote the nominal distribution of trajectory pairs in the offline dataset $\mathcal{D}_{\rm off}$. Define $\bm{\mu}^{(t)}_{\rm hyb} = \frac{1}{t - 1 + \gamma} (\sum_{i=1}^{t-1} \pi^{(i)} \otimes \tilde{\pi}^{(i)} + \gamma \mathbb{P}_{\rm off})$.

Define 
\begin{align*}
    f_{\pi}(\tau, \taut) = \beta \log \frac{\pi(\tau)}{\piref(\tau)} - \beta \log \frac{\pi(\taut)}{\piref(\taut)}. 
\end{align*}
\begin{lemma}[Concentration for Algorithm \ref{alg:hybrid_rlhf}]\label{lem:concentration}
Suppose that Assumptions \ref{ass:realizability},\ref{ass:vmax} hold. Then
   Algorithm \ref{alg:hybrid_rlhf} guarantees that with probability at least
  $1-\delta$, for all steps $t\in [T]$,
  \begin{align*}
          &\alpha\cdot\E_{s_1\sim\rho,\tau\sim\Tilde{\pi}\ind{t-1}}{[\log(\pi\ind{t}(\tau))-\log(\pistarb(\tau))]} + \kappa \cdot\left[\E_{s_1\sim\rho{},(\tau,\taut)\sim{}\bm{\mu}\ind{t}_{\rm hyb}\mid{}s_1}{[f_{\pi\ind{t}}(\tau,\taut)
        - f_{\pistarb}(\tau,\taut)]^2}\right] \notag \\&\leq \frac{2\log(2|\Pi|T\delta^{-1})}{\gamma + t-1} + \frac{\alpha}{\beta}\Vmax\sqrt{\frac{2^4\log(2|\Pi|T\delta^{-1})}{\gamma + t-1}},   \end{align*}
  for $\kappa = (8(\Rmax+\Vmax)e^{2\Rmax})^{-2}$.
\end{lemma}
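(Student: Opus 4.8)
The plan is to read the preference part of the objective in \eqref{eq:HPO} as the negative log-likelihood of a Bradley--Terry model whose implicit reward is $\beta\log(\pi/\piref)$, and to read the optimism part as a bounded martingale. For a pair $(\tau,\taut)$ sharing initial state $s_1$, policy $\pi$ induces preference probability $\sigma(f_\pi(\tau,\taut))$. By policy realizability (Assumption~\ref{ass:realizability}) together with the DCMDP identity \eqref{eq:DCMDP}, $\pistarb\in\Pi$ satisfies $f_{\pistarb}(\tau,\taut)=r(\tau)-r(\taut)$ (the baselines $V^\star_\beta(s_1)$ cancel), so $\sigma(f_{\pistarb}(\tau,\taut))=\mathbb{P}(\tau\succ\taut\mid s_1)$ is exactly the true probability \eqref{eq:btl}; hence the preference loss is well specified with $\pistarb$ as population minimizer. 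Writing $\ell(\pi;\tau^+,\tau^-)=-\log\sigma(f_\pi(\tau^+,\tau^-))$, optimality of $\pi\ind{t}$ for \eqref{eq:HPO} at loop index $t-1$ (with $\pistarb$ a feasible competitor) gives
\begin{align*}
\alpha\sum_{\tau\in\mathcal{D}^{(t-1)}_{\rm opt}}\bigl[\log\pi\ind{t}(\tau)-\log\pistarb(\tau)\bigr] + \sum_{(\tau^+,\tau^-)\in\mathcal{D}^{(t-1)}_{\rm hyb}}\bigl[\ell(\pi\ind{t};\tau^+,\tau^-)-\ell(\pistarb;\tau^+,\tau^-)\bigr]\le 0.
\end{align*}

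\textbf{Two concentration arguments.} I would then lower bound each empirical sum. For the preference term, the key step is a sequential maximum-likelihood concentration for the log-loss of the Bradley--Terry model: $\mathcal{D}^{(t-1)}_{\rm hyb}$ consists of $t-1$ adaptively generated online pairs (pair $i$ from $\pi\ind{i}\otimes\Tilde{\pi}\ind{i}$) together with $\gamma$ i.i.d.\ offline pairs from $\mathbb{P}_{\rm off}$. Building an exponential supermartingale from the likelihood ratios and taking a union bound over $\pi\in\Pi$ and $t\in[T]$ yields, with probability $\ge 1-\delta$, simultaneously for all $t$,
\begin{align*}
\sum_{(\tau^+,\tau^-)\in\mathcal{D}^{(t-1)}_{\rm hyb}}\bigl[\ell(\pi\ind{t};\tau^+,\tau^-)-\ell(\pistarb;\tau^+,\tau^-)\bigr] \ge c_1 (t-1+\gamma)\,\E_{(\tau,\taut)\sim\bm{\mu}\ind{t}_{\rm hyb}}\bigl[\Dhels{\Ber(\sigma(f_{\pi\ind{t}}))}{\Ber(\sigma(f_{\pistarb}))}\bigr]-c_2\log(2|\Pi|T\delta^{-1}),
\end{align*}
where the per-step conditional expectations of the online and offline contributions assemble into $(t-1+\gamma)\,\E_{\bm{\mu}\ind{t}_{\rm hyb}}[\cdot]$ precisely because the offline draws are i.i.d.\ from $\mathbb{P}_{\rm off}$. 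For the optimism term, $|\log(\pi(\tau)/\piref(\tau))|\le\Vmax/\beta$ for all $\pi\in\Pi$ (Assumption~\ref{ass:vmax}), so each summand is bounded by $2\Vmax/\beta$; a Freedman/Azuma bound with a union over $\Pi$ and $[T]$ gives
\begin{align*}
\sum_{\tau\in\mathcal{D}^{(t-1)}_{\rm opt}}\bigl[\log\pi\ind{t}(\tau)-\log\pistarb(\tau)\bigr] \ge (t-1+\gamma)\,\E_{\tau\sim\Tilde{\pi}\ind{t-1}}\bigl[\log\pi\ind{t}(\tau)-\log\pistarb(\tau)\bigr]-\frac{\Vmax}{\beta}\sqrt{2^4(t-1+\gamma)\log(2|\Pi|T\delta^{-1})}.
\end{align*}

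\textbf{From Hellinger to the $f$-gap and combination.} I would next lower bound the Hellinger term by the squared gap. Using $\Dhels{\Ber(\sigma(u))}{\Ber(\sigma(v))}\ge\tfrac12(\sigma(u)-\sigma(v))^2$, a mean-value step $\sigma(u)-\sigma(v)=\sigma'(\xi)(u-v)$, and the range bounds $|f_{\pistarb}|\le\Rmax$, $|f_\pi|\le 2\Vmax$ (so $\sigma'$ is bounded below on the relevant interval by a quantity of order $e^{-2\Rmax}$), one obtains $\Dhels{\Ber(\sigma(f_{\pi\ind{t}}))}{\Ber(\sigma(f_{\pistarb}))}\gtrsim (f_{\pi\ind{t}}-f_{\pistarb})^2/\bigl((\Rmax+\Vmax)e^{2\Rmax}\bigr)^2$; tracking constants fixes the factor $\kappa=(8(\Rmax+\Vmax)e^{2\Rmax})^{-2}$. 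Substituting both concentration bounds into the optimality inequality and rearranging gives
\begin{align*}
\alpha(t-1+\gamma)\,\E_{\tau\sim\Tilde{\pi}\ind{t-1}}\bigl[\log\pi\ind{t}(\tau)-\log\pistarb(\tau)\bigr] + \kappa(t-1+\gamma)\,\E_{(\tau,\taut)\sim\bm{\mu}\ind{t}_{\rm hyb}}\bigl[(f_{\pi\ind{t}}-f_{\pistarb})^2\bigr] \le 2\log(2|\Pi|T\delta^{-1})+\frac{\alpha\Vmax}{\beta}\sqrt{2^4(t-1+\gamma)\log(2|\Pi|T\delta^{-1})},
\end{align*}
and dividing through by $t-1+\gamma$ yields the stated bound.

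\textbf{Main obstacle.} The crux is the sequential MLE concentration for the \emph{mixed} (adaptive online plus i.i.d.\ offline) preference data: one must verify that the per-step conditional Hellinger terms telescope into $(t-1+\gamma)\,\E_{\bm{\mu}\ind{t}_{\rm hyb}}[\cdot]$, with the offline block supplying the extra $\gamma$-weighting. This is exactly what replaces the effective sample size $t-1$ of \citet{xie2024exploratory} by $t-1+\gamma$ and drives the improved rate. By comparison, the Hellinger-to-$f$-gap conversion (elementary calculus on $\sigma$) and the bounded-martingale bound for the optimism term are routine.
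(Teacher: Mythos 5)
Your proposal is correct and follows essentially the same route as the paper's proof: a decomposition of the HPO objective into the preference log-loss term $\widehat{L}^{(t)}$ and the optimism term $\widehat{B}^{(t)}$, an ERM optimality inequality against the feasible competitor $\pistarb$, a martingale-Chernoff (exponential supermartingale) MLE concentration for the hybrid preference data combined with the Hellinger-to-$f$-gap conversion (the paper's invocation of Lemma C.8 of \citet{xie2024exploratory}), and Azuma--Hoeffding for the optimism term, followed by a union bound over $t\in[T]$. Your write-up in fact makes explicit two points the paper leaves terse or slightly garbled --- the well-specification of the Bradley--Terry likelihood at $\pistarb$ via the DCMDP identity \eqref{eq:DCMDP}, and the $\sqrt{(t-1+\gamma)\log(\cdot)}$ scaling of the Azuma deviation, which the paper's intermediate Lemma~\ref{lem:B} states without the $(t+\gamma-1)$ factor inside the square root even though the final statement requires it.
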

\begin{proof}
Fix $t \in \{2, \ldots, T+1\}$. 

We wish to decompose the objective in HPO, Eq~\eqref{eq:HPO} into 2 terms as defined below:
Define the following quantities:
\begin{align}\label{eq:HPO_LB}
    \pi^{(t)} = \argmin_{\pi \in \Pi} \bigg[\widehat{L}^{(t)}(\pi) + \widehat{B}^{(t)}(\pi)\bigg].
\end{align}
\begin{align*}
    \widehat{L}^{(t)}(\pi) = \sum_{(\tau^{+}, \tau^{-}) \in \mathcal{D}^{(t)}_{\rm hyb}} \log \bigg[ \sigma(\beta \log \frac{\pi(\tau^{+})}{\pi_{\rm ref}(\tau^{+})} - \beta \log \frac{\pi(\tau^{-})}{\pi_{\rm ref}(\tau^{-})})\bigg].
\end{align*}
\begin{align*}
    \widehat{B}^{(t)}(\pi) = \alpha \sum_{\tau \in \mathcal{D}^{(t)}_{\rm opt}} \log \pi(\tau).
\end{align*}
Now we provide concentration lemmas for these individual terms before combining them together. 
\begin{lemma}\label{lem:L} For any fixed $t \geq 1$, for all $\pi \in \Pi$ with probability at least $1 - \delta$, we have:
    \begin{align*}
        \kappa \cdot\left[\E_{s_1\sim\rho{},(\tau,\taut)\sim{}\bm{\mu}\ind{t}_{\rm hyb}\mid{}s_1}{[f_{\pi\ind{t}}(\tau,\taut)
        - f_{\pistarb}(\tau,\taut)]^2}\right] \leq \widehat{L}^{(t)}(\pi) - \widehat{L}^{(t)}(\pistarb) + \log(2|\Pi|T\delta^{-1})
    \end{align*}
\end{lemma}
\begin{proof}
    Use the definition of $\bm{\mu}^{(t)}_{\rm hyb}$ to see the dataset $\mathcal{D}_{\rm hyb}^{(t)}$ is a sequence adapted to the filtration $\mathcal{F}^{(\gamma + t)} = \sigma(\mathcal{D}_{\rm off}, \ldots, \mathcal{D}_{\rm off}, (\tau^{(1)}, \tilde{\tau}^{1}), \ldots, (\tau^{(t-1)}, \tilde{\tau}^{(t-1)}))$. Then apply the Martinagle Chernoff Theorem to get a high probability bound. Combine it with Lemma C.8 \citep{xie2024exploratory} to get the above result. 
\end{proof}
\begin{lemma}\label{lem:B} For any fixed $t \geq 1$, for all $\pi \in \Pi$ with probability at least $1 - \delta$, we have:
    \begin{align*}
    \alpha\cdot (t + \gamma -1) \cdot \E_{s_1\sim\rho,\tau\sim\Tilde{\pi}\ind{t-1}}{[\log(\pi\ind{t}(\tau))-\log(\pistarb(\tau))]} \leq \widehat{B}^{(t)}(\pi) - \widehat{B}^{(t)}(\pistarb) + \frac{\alpha}{\beta}\Vmax\sqrt{2^4\log(2|\Pi|\delta^{-1})}
\end{align*}
\end{lemma}
\begin{proof}
    Apply Azuma Hoeffding with its sample average over $t + \gamma$ terms.
\end{proof}

Combining Lemma~\ref{lem:L} and Lemma~\ref{lem:B}, taking an union bound over all time steps $t \in [T]$, and utilizing the definition of $\pi^{(t)}$ in Eq.~\eqref{eq:HPO_LB} to note $\widehat{B}^{(t)}(\pi) + \widehat{L}^{(t)}(\pi) \leq \widehat{B}^{(t)}(\pistarb) + \widehat{L}^{(t)}(\pistarb)$, get  we get the stated result.
    % Follows from proof of lemma C.5 in \citep{xie2024exploratory} with the following differences:
    % \begin{enumerate}
    %     \item Apply Azuma Hoeffding for the first term with its sample average over $t + \gamma$ terms.
    %     % \item Use the definition of $\bm{\mu}^{(t)}_{\rm hyb}$ to see the dataset $\mathcal{D}_{\rm hyb}^{(t)}$ is a sequence adapted to the filtration $\mathcal{F}^{(\gamma + t)} = \sigma(\mathcal{D}_{\rm off}, \ldots, \mathcal{D}_{\rm off}, (\tau^{(1)}, \tilde{\tau}^{1}), \ldots, (\tau^{(t-1)}, \tilde{\tau}^{(t-1)}))$. 
    %     % \item Then apply the Martinagle Chernoff Theorem for the second term. 
    % \end{enumerate} 
\end{proof}
\subsection{Proof of Theorem~\ref{thm:main_general}}
We use the standard steps for regret decomposition, and similar proof idea in \citep{xie2024exploratory}. The main difference in our analysis is the definition of the term $\mathcal{I}^{(t)}$, which allows us to use the faster convergence concentration lemma in Lemma~\ref{lem:concentration}, and also use our definition of the SEC in the hybrid RLHF case, $\coefffull$, which is smaller than the SEC in the online case. 
\maingeneral*
\begin{proof}
    Using the regret decomposition techniques for KL-Regularized MDPs from \citep{xie2024exploratory}, we derive:
\begin{align*}
    &J_{\beta}(\pi_{\beta}^\ast) - J_{\beta}(\pi^{(t)}) \\
    &\leq \frac{6 \Vmax}{T} + \frac{1}{T} \sum_{t=2}^T \mathbb{E}_{\tau \sim \tilde{\pi}^{(t-1)}} \bigg[ \beta \log(\pi\ind{t}(\tau)) - \beta \log(\pistarb(\tau)) \bigg] \\
    &+ \frac{1}{T} \sum_{t=2}^T \mathbb{E}_{s_1 \sim \rho, \tau \sim \pi^{(t)}, \taut \sim \tilde{\pi}^{(t)}} \bigg[ \beta \log \frac{\pi^{(t)}(\tau)}{\pi_{\rm ref}(\tau)} -r(\tau) - \beta \log \frac{\pi^{(t)}(\taut)}{\pi_{\rm ref}(\taut)} + r(\taut) \bigg]
\end{align*}
Recalling the definition of $\bm{\mu}^{(t)}_{\rm hyb} = \frac{1}{t - 1 + \gamma} (\sum_{i=1}^{t-1} \pi^{(i)} \otimes \tilde{\pi}^{(i)} + \gamma \mathbb{P}_{\rm off})$, we define:
\begin{align*}
    \mathcal{I}^{(t)} = \frac{\left(\mathbb{E}_{s_1 \sim \rho, \tau \sim \pi^{(t)}, \taut \sim \tilde{\pi}^{(t)}} \bigg[ \beta \log \frac{\pi^{(t)}(\tau)}{\pi_{\rm ref}(\tau)} -r(\tau) - \beta \log \frac{\pi^{(t)}(\taut)}{\pi_{\rm ref}(\taut)} + r(\taut) \bigg]\right)^2}{\Vmax^2 \vee (t - 1 + \gamma) \cdot \mathbb{E}_{s_1 \sim \rho, \tau, \taut \sim \bm{\mu}^{(t)}_{\rm hyb} | s_1} \bigg[ \left(\beta \log \frac{\pi^{(t)}(\tau)}{\pi_{\rm ref}(\tau)} -r(\tau) - \beta \log \frac{\pi^{(t)}(\taut)}{\pi_{\rm ref}(\taut)} + r(\taut)\right)^2\bigg]} 
\end{align*}
Using AM-GM inequality we can bound:
\begin{align*}
    & \mathbb{E}_{s_1 \sim \rho, \tau \sim \pi^{(t)}, \taut \sim \tilde{\pi}^{(t)}} \bigg[ \beta \log \frac{\pi^{(t)}(\tau)}{\pi_{\rm ref}(\tau)} -r(\tau) - \beta \log \frac{\pi^{(t)}(\taut)}{\pi_{\rm ref}(\taut)} + r(\taut) \bigg]\\ 
    &\leq  \frac{\mathcal{I}^{(t)}}{2 \eta}  + \frac{\eta}{2}\cdot  \left( \Vmax^2 \vee (t - 1 + \gamma) \cdot \mathbb{E}_{s_1 \sim \rho, \tau, \taut \sim \bm{\mu}^{(t)}_{\rm hyb}} \bigg[ \left(\beta \log \frac{\pi^{(t)}(\tau)}{\pi_{\rm ref}(\tau) | s_1} -r(\tau) - \beta \log \frac{\pi^{(t)}(\taut)}{\pi_{\rm ref}(\taut)} + r(\taut)\right)^2\bigg]\right)\\
    &\leq  \frac{\mathcal{I}^{(t)}}{2 \eta}  + \frac{\eta}{2}\cdot  \left( \Vmax^2 + (t - 1 + \gamma) \cdot \mathbb{E}_{s_1 \sim \rho, \tau, \taut \sim \bm{\mu}^{(t)}_{\rm hyb} | s_1} \bigg[ \left(\beta \log \frac{\pi^{(t)}(\tau)}{\pi_{\rm ref}(\tau)} -r(\tau) - \beta \log \frac{\pi^{(t)}(\taut)}{\pi_{\rm ref}(\taut)} + r(\taut)\right)^2\bigg]\right)\\
\end{align*}

Recall the definition of $\coefffull$, and note that $\sum_{t=1}^T \mathcal{I}^{(t)} \leq \coefffull$.

Now we can write:
\begin{align*}
    &J_{\beta}(\pi_{\beta}^\ast) - J_{\beta}(\pi^{(t)})\\
    &\leq \frac{6 \Vmax}{T} + \frac{\coefffull}{2\eta T} + \frac{\eta}{2} \Vmax^2 + \frac{1}{T} \sum_{t=2}^T \bigg(\mathbb{E}_{\tau \sim \tilde{\pi}^{(t-1)}} \bigg[ \beta \log(\pi\ind{t}(\tau)) - \beta \log(\pistarb(\tau)) \bigg] \\
    &+ \frac{\eta}{2} \cdot (t - 1 + \gamma) \cdot \mathbb{E}_{s_1 \sim \rho, \tau, \taut \sim \bm{\mu}^{(t)}_{\rm hyb} | s_1} \bigg[ \left(\beta \log \frac{\pi^{(t)}(\tau)}{\pi_{\rm ref}(\tau)} -r(\tau) - \beta \log \frac{\pi^{(t)}(\taut)}{\pi_{\rm ref}(\taut)} + r(\taut)\right)^2\bigg]\bigg)
\end{align*}
For a fixed $t \in \{2, \ldots, T + 1\}$, we consider the term:
\begin{align*}
    &\mathbb{E}_{\tau \sim \tilde{\pi}^{(t-1)}} \bigg[ \beta \log(\pi\ind{t}(\tau)) - \beta \log(\pistarb(\tau)) \bigg]\\
    &\qquad\qquad + \frac{\eta}{2} \cdot (t - 1 + \gamma) \cdot \mathbb{E}_{s_1 \sim \rho, \tau, \taut \sim \bm{\mu}^{(t)}_{\rm hyb} | s_1} \bigg[ \left(\beta \log \frac{\pi^{(t)}(\tau)}{\pi_{\rm ref}(\tau)} -r(\tau) - \beta \log \frac{\pi^{(t)}(\taut)}{\pi_{\rm ref}(\taut)} + r(\taut)\right)^2\bigg]
\end{align*}
Setting $\eta = \frac{\beta \kappa}{\alpha (T + \gamma)}$, and invoking Lemma~\ref{lem:concentration}, we get:
\begin{align*}
    &J_{\beta}(\pi_{\beta}^\ast) - J_{\beta}(\pi^{(t)})\\
    &\approxleq \frac{\Vmax}{T} + \frac{\alpha (T + \gamma )\coefffull}{\beta \kappa T} + \frac{\beta \kappa}{\alpha(T + \gamma)} \Vmax^2 + 
    \\& \sum_{t=2}^T \frac{1}{T} \left( \frac{\beta}{\alpha}\frac{\log(|\Pi|T\delta^{-1})}{\gamma + t-1} + \Vmax\sqrt{\frac{\log(|\Pi|T\delta^{-1})}{\gamma + t-1}} \right) \\
    &\approxleq \frac{\Vmax}{T} + \frac{\alpha (T + \gamma )\coefffull}{\beta \kappa T} + \frac{\beta \kappa}{\alpha(T + \gamma)} \Vmax^2 + \\
    & \frac{\beta}{\alpha}\frac{\log(|\Pi|T\delta^{-1}) \log(T)}{T} + \Vmax\sqrt{\frac{\log(|\Pi|T\delta^{-1})}{T}}
\end{align*}
Choosing 
\begin{align*}
    \alpha \propto \frac{\beta}{(\Vmax + \Rmax)e^{2 \Rmax}}\cdot\sqrt{\frac{\log(|\Pi|T\delta^{-1})\log(T)}{(T + \gamma)\cdot \coefffull}}
\end{align*}
and noting $\kappa < \Vmax^{-2}$, we get:
\begin{align*}
    &J_{\beta}(\pi_{\beta}^\ast) - J_{\beta}(\pi^{(t)})\\
    &\leq \kappa^{-1}\sqrt{\frac{(1 + \gamma / T)\cdot\coefffull\cdot\log(|\Pi|T\delta^{-1})\log(T)}{T}} + \Vmax\sqrt{\frac{\log(|\Pi|T\delta^{-1})}{T}}\\
    &\leq \mathcal{O}((\Vmax+\Rmax)e^{2\Rmax})\cdot \sqrt{\frac{(1 + \gamma / T)\cdot\coefffull\cdot\log(|\Pi|T\delta^{-1})\log(T)}{T}}.
\end{align*}
\end{proof}
% \subsection{Linear MDP}
% % \textbf{Hybrid:} 
% \linearupper*
% \begin{proof}
%  Since $\|\phi\|_2 \leq 1$ by definition of Linear MDPs, we have $\textrm{Trace}(\Lambda_{\rm off}) \leq 4$. Using the elliptical potential theorem \citep{lattimore2020bandit}, we derive:
% \begin{align*}
%     \coefffull &\leq 2 \left( \log( |\gamma \Lambda_{\rm off} + (\Vmax^2 + 4T)\mathbf{I}|) - \log(\rm{det}(\gamma\Lambda_{\rm off} + \Vmax^2\mathbf{I}))\right)\\ 
%     &= 2  \sum_{i \in [d]}\left(\log\left(\gamma \lambda_{\rm off}^{(i)} + \Vmax^2 + 4T\right) - \log\left(\gamma \lambda_{\rm off}^{(i)} + \Vmax^2\right)\right)\\
%      &\leq 2  \sum_{i \in [d]}\left(\log\left(1 + \frac{4T}{\max(\Vmax^2, \gamma \lambda_{\rm off}^{(i)})}\right)\right)\\
% \end{align*}
% where $\lambda_{\rm off}^{(i)}$ is the $i^{\rm th}$ eigenvalue of $\Lambda_{\rm off}$. 
% \end{proof}
% Choosing $\gamma = \mathcal{O}(T)$, only terms where $\lambda_{\rm off}^{(i)} \leq \Omega(\frac{1}{T})$  remain in the summand, corresponding to the directions where the offline dataset has poor exploration. Let $d_{\rm hyb}$ denote the number of indices in $[d]$ such that $\lambda_{\rm off}^{(i)} \leq \Omega(\frac{1}{T})$. Therefore we get $\coefffull \leq \mathcal{O}\left(d_{\rm hyb}\log\left(1 + \frac{4T}{\Vmax^2}\right)\right)$.
% Hence, the SEC scales as the effective number of dimensions yet to be explored sufficiently.

\section{Lower Bound Proofs}
\label{app:lbproofs}

In this section, we will analyze the proofs of \cref{thm:lbon} and \cref{thm:lboff}. Our proofs rely on a key observation that the RLHF problem, as described in \cref{sec:prelim} can essentially be seen as a BTL-based dueling bandit (DB) \add{cite DB} setting \citep{rc,bengs2022stochastic}, both for the online and offline problem. For completeness, we first describe the preference model below:

\textbf{BTL-based Pairwise Preference (Dueling) Model:}
Consider a decision space $\cD$. Each action/item $\a \in \cD$ is associated to a linear score/reward $s(\a) \in \R$. The probability of action $\a$ preferred over action $\b$ is given by: 
\[
P(\a \succ \b) = \big(\sigma\big(s(\a) - s(\b)\big)\big),
\]
where $\sigma(\cdot)$ being the sigmoid transformation given by $\sigma(x) = \frac{1}{1+e^{-x}}$ for any $x \in \R$. Note the above preference model exactly boils down to the BTL model-based preferences described in \cref{eq:btl} in \cref{sec:prelim}. We will denote this preference model as BTL-DB.

This essentially establishes the connection between the RLHF feedback model and BTL-based preference model BTL-DB (note that the trajectories $(\tau)$ lie in the action space $\cD$ for the DB problem). Further note that the reward/ score function $s(\cdot)$ essentially corresponds to the $J(\cdot)$ function of \cref{eq:KLobj} (for $\beta =0$). We next establish a connection between two well-studied feedback models in learning theory: \emph{linear score BTL-DB} feedback and linear bandits (\linb) feedback. We first describe the two feedback models below:

\paragraph{Linear score based BTL-DB (\lindb) Feedback model.}
In addition to the modeling assumptions described above for BTL-DB, we further assume $\cD \subset \R^d$ and the underlying score/reward function $(s)$ is linear, i.e. $s(\a) = \nangle{\a, \w}$, where $\w \in \R^d$ is an unknown direction in $\R^d$. We will also denote by $\a^* := \arg\max_{\a \in \cD}s(\a)$ the action with the highest score. 
The feedback model outputs a $1$-bit binary feedback $o \sim \text{Ber}(P(\a \succ \b))$ upon receiving any pair of actions $(\a,\b) \in \cD\times \cD$ -- we will use the abbreviation \lindb\ for this feedback model.

\paragraph{Linear Bandits (\linb) Feedback Model.} 
Similar to the \lindb\ setting above, we again assume a decision space $\cD\subset \R^d$ with each arm $\a \in \cD$ being associated to an underlying linear score/reward function $s(\a) = \nangle{\a, \w}$, $\w \in \R^d$ is unknown. As oppose to the preference feedback observed in the \lindb\ case, in this feedback model one gets to observe a noisy feedback of the true reward of any queried arm $\a \in \cD$ given by $r(\a) = s(\a) + \eta$, where $\eta$ is usually a $0$ mean noise. For example, $\eta \sim \text{Gaussian}(0,1)$ etc. We will use the abbreviation \linb\ for this feedback model.

\subsection{Simulating \lindb\ feedback with \linb\ Feedback.}
\label{sec:red}
Based on the reduction idea of \cite[Lemma 8]{saha2021optimal}, we know that one can always simulate $1$ unit of \lindb\ pairwise preference feedback from $2$ units of \linb\ reward/score feedback. Formally the claim is given by:

\begin{lemma}
\label{lem:red}
Given any two actions $\a$ and $\b \in \cD$ if $r(\a) = s(\a) + \eta_1$ and $r(\b) = s(\b) + \eta_2$ are two (noisy) \linb\ feedback, s.t. $\eta_1, \eta_2 \overset{\text{iid}}{\sim} \text{Gumbel(0,1)}$ are iid draws from Gumbel$(0,1)$ distribution, then the binary outcome $o = \mathbf 1(r(\a) > r(\b)) \sim P(\a \succ \b) $ follows \lindb \ feedback. 
\end{lemma}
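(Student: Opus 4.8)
The plan is to directly compute the probability that the simulated binary outcome $o = \mathbf{1}(r(\a) > r(\b))$ equals $1$ and check that it coincides with the BTL-DB preference probability $\sigma(s(\a)-s(\b))$. Since $o$ is Bernoulli, matching $\Pr[o=1]$ uniquely pins down its law, so this suffices. Writing $\delta = s(\a) - s(\b)$, by construction
\[
\Pr[o = 1] = \Pr[s(\a) + \eta_1 > s(\b) + \eta_2] = \Pr[\eta_2 - \eta_1 < \delta],
\]
so everything reduces to understanding the distribution of the difference of the two independent Gumbel noises. The crux is the classical fact that the difference of two i.i.d.\ $\mathrm{Gumbel}(0,1)$ variables is standard logistic, which is exactly what produces the sigmoid link.

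First I would condition on $\eta_1 = x$ and integrate against the Gumbel CDF $F(x) = e^{-e^{-x}}$ and density $f(x) = e^{-x}e^{-e^{-x}}$, giving
\[
\Pr[o=1] = \int_{-\infty}^{\infty} F(\delta + x)\, f(x)\, dx = \int_{-\infty}^{\infty} e^{-e^{-(\delta + x)}}\, e^{-x} e^{-e^{-x}}\, dx.
\]
Then I would evaluate this integral via the substitution $u = e^{-x}$ (so that $e^{-x}\,dx = -du$ and $e^{-(\delta+x)} = e^{-\delta}u$), which collapses the integrand to a pure exponential:
\[
\Pr[o=1] = \int_0^{\infty} e^{-u(1 + e^{-\delta})}\, du = \frac{1}{1 + e^{-\delta}} = \sigma(s(\a) - s(\b)) = P(\a \succ \b).
\]
This identifies the law of $o$ with $\Ber(P(\a \succ \b))$, which is precisely the \lindb\ feedback, completing the argument.

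There is no genuine obstacle here: the entire content is the single clean computation showing the Gumbel difference is logistic, and the $\mathrm{Gumbel}(0,1)$ choice is exactly the one calibrated to reproduce the BTL sigmoid. The only point requiring care is bookkeeping in the substitution — the orientation of the integration limits and the sign of $du$ — but once $u = e^{-x}$ is introduced the closed form is immediate. An equivalent higher-level route would be to invoke the symmetry and known CDF of the standard logistic distribution directly rather than recomputing the integral; I would nonetheless prefer the self-contained computation, since it makes transparent \emph{why} this particular noise choice is the one that exactly realizes the preference model of \cref{eq:btl}.
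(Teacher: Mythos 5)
Your proof is correct, and it is genuinely more self-contained than what the paper does: the paper never carries out this computation at all, but instead disposes of \cref{lem:red} by pointing the reader to the proof of Lemma 9 of \citet{saha2021optimal}, where the same reduction (noisy rewards with Gumbel noise simulating BTL preference feedback) is established. The mathematical content is the same classical fact in both cases --- the difference of two i.i.d.\ $\mathrm{Gumbel}(0,1)$ variables is standard logistic, which is the random-utility characterization underlying the BTL model --- but you actually verify it: conditioning on $\eta_1$, integrating the Gumbel CDF against the Gumbel density, and collapsing the integral with the substitution $u = e^{-x}$ to obtain
\[
\Pr[o=1] \;=\; \int_0^{\infty} e^{-u(1+e^{-\delta})}\,du \;=\; \frac{1}{1+e^{-\delta}} \;=\; \sigma\bigl(s(\a)-s(\b)\bigr),
\]
with $\delta = s(\a)-s(\b)$, which together with the observation that a binary outcome's law is determined by $\Pr[o=1]$ (and that ties occur with probability zero) yields $o \sim \Ber(P(\a\succ\b))$, i.e.\ exactly \lindb\ feedback. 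What the paper's citation buys is brevity; what your derivation buys is transparency --- it makes visible why $\mathrm{Gumbel}(0,1)$ is precisely the noise distribution calibrated to the sigmoid link in \cref{eq:btl}, rather than leaving that as an opaque appeal to prior work. The one point worth stating explicitly if this were spliced in is the independence of $\eta_1,\eta_2$, which is what licenses the conditioning step; you use it implicitly and it holds by assumption.
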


Interested readers are encouraged to go over the proof of Lemma 9 of \cite{saha2021optimal} to see the proof of \cref{lem:red} above.
%Sample complexity can be at least twice 
Given the above result, we are now ready to prove the lower bound results of \cref{thm:lbon} and \cref{thm:lboff} as discussed in the following two subsections. We would first define the online and offline versions of the best-arm identification problem (BAI) with \lindb\ feedback.

\begin{figure}[H]
	\begin{center}
\includegraphics[width=0.43\textwidth]{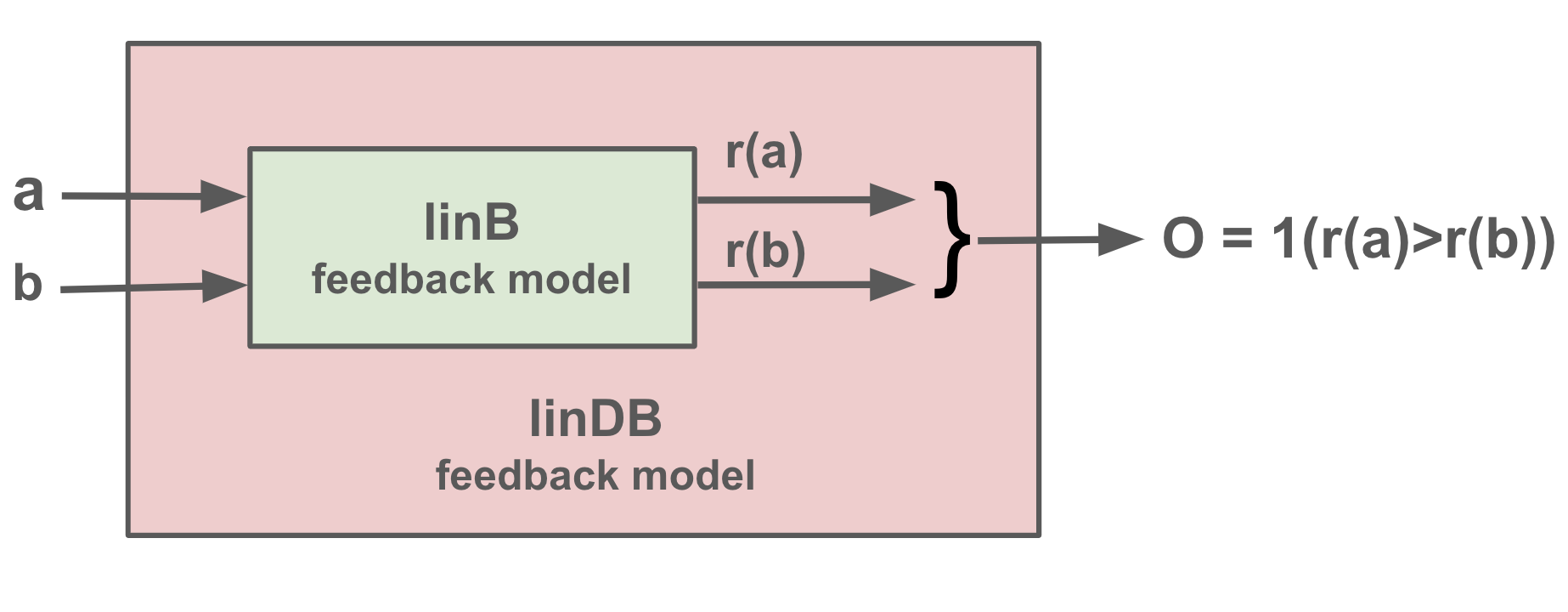} 
		\caption{Simulating \lindb\ feedback with \linb\ feedback}
		\label{fig:red}
	\end{center}
\end{figure}

\subsection{Proof of \cref{thm:lbon}}
\label{app:lbon}

The \emph{online version of BAI problem with \lindb\ feedback} is an active sequential decision-making process where at each round $t$ the learner (i.e. the algorithm) is supposed to play a pair of actions $(\a_t,\b_t) \in \cD \times \cD$, upon which it gets to see a binary preference feedback $o_t \sim P(\a_t \succ \b_t)$  is the preference feedback of the pair $(\a_t,\b_t)$ generated according to the \lindb\ feedback model. Given a horizon of $T$ rounds, the objective of the learner is to output an arm $\a_T \in \cD$ such that $\E[r(\a_T)] = s(\a_t)$ is maximized or the difference $s(\a^*) - s(\a_T)$ is minimized. Let us denote this problem as \emph{Online-BAI-}\lindb\ problem.

Similarly, the \emph{online version of BAI problem with \linb\ feedback} is an active sequential decision-making process where at each round $t$ the learner (i.e. the algorithm) is supposed to play an action $\a_t \in \cD$, upon which it gets to see a real-valued reward/score feedback $r_t$ generated according to the \linb\ feedback model. Given a horizon of $T$ rounds, the objective of the learner is to output an arm $\a_T \in \cD$ such that $\E[r(\a_T)] = s(\a_t)$ is maximized or the difference $s(\a^*) - s(\a_T)$ is minimized. Let us denote this problem as \emph{Online-BAI-}\linb\ problem.

\textbf{Key Idea: Reducing \emph{Online-BAI-}\linb\ to \emph{Online-BAI-}\lindb.} Owing to \cref{lem:red}, it is easy to see that one could reduce the \emph{Online-BAI-}\linb\ problem to \emph{Online-BAI-}\lindb\ problem, i.e. given any algorithm $\cA^{\lindb}$ for the latter problem, one can use it to solve the former. We described the idea in \cref{alg:red} below to construct $\cA^{\linb}$---an \emph{Online-BAI-}\linb\ algorithm using $\cA^{\lindb}$. %Interested readers are encouraged go through the reduction ideas in \cite[Algorithm 2]{saha2021optimal} for more details. 

\vspace{-5pt}
 \begin{center}
\begin{algorithm}[H]
   \caption{Simulating $\cA^{\linb}$ using $\cA^{\lindb}$ } %(using $\cA^{cdb}$)
   \label{alg:red}
\begin{algorithmic}[1]
\STATE \textbf{Input:} Time horizon $T$.
	\FOR {$t = 1, 2, \ldots \lceil\frac{T}{2}\rceil$}	
	  \STATE Receive: $(\a_t,\b_t) \leftarrow$ duel played by $\cA^{\lindb}$ at time $t$.
	  \STATE Play $\a_t$ at round $(2t-1)$. Receive $r(\a_t)$.
	  \STATE Play $\b_t$ at round $2t$. Receive  $r(\b_t)$.
	  \STATE Feedback: $o_t = \mathbf 1(r(\a_t) > r(\b_t))$ to $\cA^{\lindb}$.
	\ENDFOR   
\end{algorithmic}
\end{algorithm}
\end{center}
\vspace{-5pt}

But since a single round of \emph{Online-BAI-}\lindb\ corresponds to two rounds of \emph{Online-BAI-}\linb\, a valid 
BAI lower bound for the latter would immediately imply a BAI lower bound for the former problem. The result of \cref{thm:lbon} now follows from the known existing lower bound for the \emph{Online-BAI-}\linb\ problem from \citep[Theorem 2]{wagenmaker2022reward}.

\subsection{Proof of \cref{thm:lboff}}
\label{app:lboff}

The \emph{offline version of BAI problem with \lindb\ feedback} is a batch decision-making process where the learner is provided with an offline dataset $\cD_{\text{off}} = \{(\a_i,\b_i,o_i)\}_{i = 1}^n$, where for each triplet $(\a_i,\b_i,o_i)$, $(\a_i,\b_i) \in \cD\times \cD$ denotes a pair of action and $o_i \sim P(\a_i \succ \b_i)$ is the preference feedback of the pair $(\a_i,\b_i)$ generated according to the \lindb\ feedback model. The objective of the learner is to output an arm $\a_n \in \cD$ using the $n$ samples of $\cD_{\text{off}}$ such that $\E[r(\a_n)] = s(\a_t)$ is maximized or the difference $s(\a^*) - s(\a_n)$ is minimized. Let us denote this problem as \emph{Offline-BAI-}\lindb\ problem.

In the same spirit, the \emph{offline version of BAI problem with \lindb\ feedback} is a batch decision-making process where the learner is provided with an offline dataset $\cD_{\text{off}} = \{(\a_i,r_i)\}_{i = 1}^n$, where $\a_i \in \cD$ is an arbitrary action in $\cD$ and $r_i$ is a noisy sample of the score/reward of arm $i$ drawn according to the \linb\ feedback model. The objective of the learner is to output an arm $\a_n \in \cD$ using the $n$ samples of $\cD_{\text{off}}$ such that $\E[r(\a_n)] = s(\a_t)$ is maximized or the difference $s(\a^*) - s(\a_n)$ is minimized. Let us denote this problem as \emph{Offline-BAI-}\linb\ problem.

\textbf{Reducing \emph{Offline-BAI-}\linb\ to \emph{Offline-BAI-}\lindb.} Owing to \cref{lem:red} and following a similar approach described in \cref{alg:red} above it is easy to see that one can obtain an algorithm for the \emph{Offline-BAI-}\linb\ problem using an algorithm for the \emph{Offline-BAI-}\lindb\ algorithm. %Interested readers are encouraged go through the reduction ideas in \cite[Algorithm 2]{saha2021optimal} for more details. 

% \red{Ideally we should have written the reduction algorithm for the offline case as well.} 

Following the same argument used for proving \cref{thm:lbon} in \cref{app:lbon} above, the result of \cref{thm:lboff} now follows from the known existing lower bound for the \emph{Offline-BAI-}\linb\ problem from \citep[Theorem 2]{li2022pessimism}.
\end{document}